\newtheorem{theorem}{Theorem}[section]
\newtheorem{lemma}[theorem]{Lemma}
\newtheorem{definition}[theorem]{Definition}
\newcommand{\fedexp}{{\sc FedExp3}}
\newcommand{\gucb}{{\sc GUCB}}
\newcommand*{\rom}[1]{%
  \textup{\uppercase\expandafter{\romannumeral#1}}%
}
\title{Doubly Adversarial Federated Bandits}
\date{} 					
\author{ \href{https://orcid.org/0000-0001-5592-8314}{Jialin Yi\includegraphics[scale=0.06]{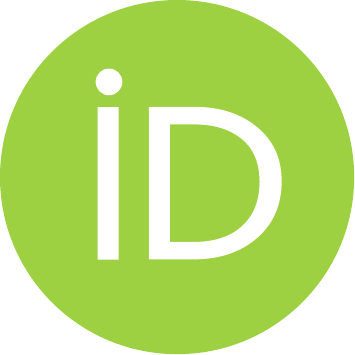}\hspace{1mm}}\quad Milan Vojnovic \\
	Department of Statistics\\
	London School of Economics and Political Science\\
	WC2A 2AE, London, United Kingdom \\
	\texttt{\{j.yi8,  m.vojnovic\}@lse.ac.uk} \\
}
\begin{document}
\maketitle

\begin{abstract}
We study a new non-stochastic federated multi-armed bandit problem with multiple agents collaborating via a communication network. The losses of the arms are assigned by an oblivious adversary that specifies the loss of each arm not only for each time step but also for each agent, which we call ``doubly adversarial". In this setting, different agents may choose the same arm in the same time step but observe different feedback. The goal of each agent is to find a globally best arm in hindsight that has the lowest cumulative loss averaged over all agents, which necessities the communication among agents.
We provide regret lower bounds for any federated bandit algorithm under different settings, when agents have access to full-information feedback, or the bandit feedback.
For the bandit feedback setting, we propose a near-optimal federated bandit algorithm called \fedexp. 
Our algorithm gives a positive answer to an open question proposed in \cite{cesa2016delay}: \fedexp\ can guarantee a sub-linear regret without exchanging sequences of selected arm identities or loss sequences among agents.
We also provide numerical evaluations of our algorithm to validate our theoretical results and demonstrate its effectiveness on synthetic and real-world datasets. 
\end{abstract}


\section{Introduction}
There is a rising trend of research on federated learning, which coordinates multiple \emph{heterogeneous} agents to collectively train a learning algorithm, while keeping the raw data decentralized \citep{kairouz2021advances}.
We consider the federated learning variant of a multi-armed bandit problem which is one of the most fundamental sequential decision making problems.
In standard multi-armed bandit problems, a learning agent needs to balance the trade-off between exploring various arms in order to learn how much rewarding they are and selecting high-rewarding arms.
In federated bandit problems, multiple heterogeneous agents collaborate with each other to maximize their cumulative rewards.
The challenge here is to design decentralized collaborative algorithms to find a \emph{globally} best arm for all 
agents while keeping their raw data decentralized.

Finding a globally best arm with raw arm or loss sequences stored in a distributed system has ubiquitous applications in many systems built with a network of learning agents.
One application is in recommender systems where different recommendation app clients (i.e. agents) in a communication network collaborate with each other to find news articles (i.e. arms) that are popular among all users within a specific region, which can be helpful to solve the \emph{cold start} problem \citep{li2010contextual, yi2021efficient}.
In this setting, the system avoids the exchange of users' browsing history (i.e. arm or loss sequences) between different clients for better privacy protection.
Another motivation is in international collaborative drug discovery research, where different countries (i.e. agents) cooperate with each other to find a drug (i.e. arm) that is uniformly effective for all patients across the world \citep{varatharajah2022contextual}. To protect the privacy of the patients involved in the research, the exact treatment history of specific patients (i.e. arm or loss sequences) should not be shared during the collaboration.

The federated bandit problems are focused on identifying a globally best arm (pure exploration) or maximizing the cumulative  group reward (regret minimization) in face of heterogeneous feedback from different agents for the same arm, which has gained much attention in recent years \citep{dubey2020differentially, zhu2021federated, huang2021federated, shi2021federated, reda2022near}. In prior work, heterogeneous feedbacks received by different agents are modeled as samples from some unknown but fixed distributions.
Though this formulation of heterogeneous feedback allows elegant statistical analysis of the regret, it may not be adequate for dynamic (non-stationary) environments. For example, consider the task of finding popular news articles within a region mentioned above. The popularity of news articles on different topics can be time-varying, e.g. the news on football may become most popular during the FIFA World Cup but may be less popular afterwards.

In contrast with the prior work, we introduce a new \emph{non-stochastic} federated multi-armed bandit problem in which the heterogeneous feedback received by different agents are chosen by an oblivious adversary.
We consider a federated bandit problem with $K$ arms and $N$ agents. The agents can share their information via a communication network. At each time step, each agent will choose one arm, receive the feedback and exchange their information with their neighbors in the network. The problem is \emph{doubly adversarial}, i.e. the losses are determined by an oblivious adversary which specifies the loss of each arm not only for each time step but also for each agent. As a result, the agents which choose the same arm at the same time step may observe different losses. 
The goal is to find the \emph{globally} best arm in hindsight, whose cumulative loss averaged over all agents is lowest, without exchanging raw information consisting of arm identity or loss value sequences among agents.
As standard in online learning problems, we focus on regret minimization over an arbitrary time horizon.


\subsection{Related work}

The doubly adversarial federated bandit problem is related to several lines of research, namely that on federated bandits, multi-agent cooperative adversarial bandits, and distributed optimization. Here we briefly discuss these related works.

\paragraph{Federated bandits}

Solving bandit problems in the federated learning setting has gained attention in recent years. \cite{dubey2020differentially} and \cite{huang2021federated}  considered the linear contextual bandit problem and extended the LinUCB algorithm \citep{li2010contextual} to the federated learning setting.
\cite{zhu2021federated} and \cite{shi2021federated}
studied a federated multi-armed bandit problem where the losses observed by different agents are i.i.d. samples from some common unknown distribution.
\cite{reda2022near} considered the problem of identifying a globally best arm for multi-armed bandit problems in a centralized federated learning setting. All these works focus on the stochastic setting, i.e. the reward or loss of an arm is sampled from some unknown but fixed distribution. Our work considers the non-stochastic setting, i.e. losses are chosen by an oblivious adversary, which is a more appropriate assumption for non-stationary environments. 

\paragraph{Multi-agent cooperative adversarial bandit}
\cite{cesa2016delay, bar2019individual, yi2022regret} studied the adversarial case where agents receive the same loss for the same action chosen at the same time step, whose algorithms require the agents to exchange their raw data with neighbors.
\cite{cesa2020cooperative} discussed the cooperative online learning setting where the agents have access to the full-information feedback and the communication is asynchronous. In these works, the agents that choose the same action at the same time step receive the same reward or loss value and agents aggregate messages received from their neighbors. Our work relaxes this assumption by allowing agents to receive different losses even for the same action in a time step. Besides, we propose a new algorithm that uses a different aggregation of messages than in the aforementioned papers, which is based on distributed dual averaging method in \cite{nesterov2009primal, xiao2009dual, duchi2011dual}.


\paragraph{Distributed optimization} \cite{duchi2011dual} proposed the dual averaging algorithm for distributed convex optimization via a gossip communication mechanism. Subsequently,  \cite{hosseini2013online} extended this algorithm to the online optimization setting.
\cite{scaman2019optimal} found optimal distributed algorithms for distributed convex optimization and a lower bound which applies to strongly convex functions. The doubly adversarial federated bandit problem with full-information feedback is a special case of distributed online linear optimization problems. Our work complements these existing studies by providing a lower bound for the distributed online linear optimization problems. Moreover, our work proposes a near-optimal algorithm for the more challenging bandit feedback setting.


\subsection{Organization of the paper and our contributions}

We first formally formulate the doubly adversarial federated bandit problem and the federated bandit algorithms we study in Section~\ref{sec::problem-setting}.
Then, in Section ~\ref{sec:lower-bound}, we provide two regret lower bounds for any federated bandit algorithm under the full-information and bandit feedback setting, respectively.
In Section~\ref{sec:upper-bound}, we present a federated bandit algorithm adapted from the celebrated Exp3 algorithm for the bandit-feedback setting, together with its regret upper bound.
Finally, we show results of our numerical experiments in Section~\ref{sec:numerical-experiments}.

Our contributions can be summarized as follows:
\begin{enumerate}[label=(\roman*)]
    \item We introduce a new federated bandit setting, doubly adversarial federated bandits, in which no stochastic assumptions are made for the heterogeneous losses received by the agents. This adversarial setting complements the prior work focuses on the stochastic setting.
    \item For both the full-information and bandit feedback setting, we provide regret lower bounds for any federated bandit algorithm. The regret lower bound for the full-information setting also applies to distributed online linear optimization problems, and, to the best of our knowledge, is the first lower bound result for this problem.
    \item For the bandit feedback setting, we propose a new near-optimal federated bandit Exp3 algorithm (\fedexp) with a sub-linear regret upper bound. Our \fedexp\ algorithm resolves an open question proposed in \cite{cesa2016delay}: it is possible to achieve a sub-linear regret for each agent simultaneously without exchanging both the action or loss information and the distribution information among agents.
\end{enumerate}


\section{Problem setting} \label{sec::problem-setting}

Consider a communication network defined by an undirected graph $\mathcal{G}=(\mathcal{V}, \mathcal{E})$, where $\mathcal{V}$ is the set of $N$ agents and $(u,v)\in \mathcal{E}$ if agent $u$ and agent $v$ can directly exchange messages.
We assume that $\mathcal{G}$ is simple, i.e. it contains no self loops nor multiple edges.
The agents in the communication network collaboratively aim to solve a non-stochastic multi-armed bandit problem. In this problem, there is a fixed set $\mathcal{A}$ of $K$ arms and a fixed time horizon $T$.
Each instance of the problem is parameterized by a tensor $L = \left( \ell_t^v(i) \right)\in [0, 1]^{T\times N\times K}$ where $\ell_t^v(i)$ is the loss associated with agent $v\in \mathcal{V}$ if it chooses arm $i \in \mathcal{A}$ at time step $t$.

At each time step $t$, each agent $v$ will choose its action $a^v_t=i$, observe the feedback $I^v_t$ and incur a loss defined as the average of losses of arm $i$ over all agents, i.e., 
\begin{equation}
    \Bar{\ell}_t (i) = \frac{1}{N} \sum_{v\in \mathcal{V}} \ell_t^v(i).
\end{equation}
At the end of each time step, each agent $v\in \mathcal{V}$ can communicate with their neighbors $\mathcal{N}(v)=\{u\in \mathcal{V}: (u, v)\in \mathcal{E}\}$. 
We assume a \emph{non-stochastic} setting, i.e. the loss tensor $L$ is determined by an oblivious adversary. In this setting, the adversary has the knowledge of the description of the algorithm running by the agents but the losses in $L$ do not depend on the specific arms selected by the agents.

The performance of each agent $v\in \mathcal{V}$ is measured by its \emph{regret}, defined as the difference of the expected cumulative loss incurred and the cumulative loss of a \emph{globally} best fixed arm in hindsight, i.e.
\begin{equation}
R_T^v(\pi, L) = \mathbb{E} \left[ \sum_{t=1}^T \Bar{\ell}_t(a_t^v) - \min_{i\in \mathcal{A}}\left\{ \sum_{t=1}^T \Bar{\ell}_t(i)\right\} \right]
\label{equ:regret}
\end{equation}
where the expectation is taken over the action of all agents under algorithm $\pi$ on instance $L$. 
We will abbreviate $R_T^v(\pi, L)$ as $R_T^v$ when the algorithm  $\pi$ and instance $L$ have no ambiguity in the context.
We aim to characterize $\max_{L}R_T^v(\pi, L)$ for each agent $v\in \mathcal{V}$ under two feedback settings, 
\begin{itemize}
    \item full-information feedback: $I^v_t = \ell_t^v$, and
    \item bandit feedback: $I^v_t = \ell_t^v(a^v_t)$.
\end{itemize}
Let  $\mathcal{F}^v_t$ be the sequence of agent $v$'s actions and feedback up to time step $t$, i.e., $\mathcal{F}^v_t = \bigcup_{s=1}^t \{a^v_s, I^v_s \}$. For a graph $\mathcal{G}=(\mathcal{V}, \mathcal{E})$, we denote as $d(u, v)$ the number of edges of a shortest path connecting nodes $u$ and $v$ in $\mathcal{V}$ and $d(v,v)=0$.

We focus on the case when $\pi$ is a \emph{federated bandit algorithm} in which each agent $v\in\mathcal{V}$ can only communicate with their neighbors within a time step.
\begin{definition}[federated bandit algorithm]\label{def:federated}
    A federated bandit algorithm $\pi$ is a multi-agent learning algorithm such that for each round $t$ and each agent $v\in \mathcal{V}$, the action selection distribution $p^v_t$ only depends on $\bigcup_{u\in \mathcal{V}} \mathcal{F}^u_{t-d(u, v)-1}$.
\end{definition}
From Definition~\ref{def:federated}, the communication between any two agents $u$ and $v$ in $\mathcal{G}$ comes with a delay equal to $d(u, v)+1$. Here we give some examples of $\pi$ in different settings:
\begin{itemize}
    \item when $|\mathcal{V}| = 1$ and $I^v_t = \ell_t^v$, $\pi$ is an online learning algorithm for learning with expert advice problems \citep{cesa1997use},
    \item when $|\mathcal{V}| = 1$ and $I^v_t = \ell_t^v(a^v_t)$, $\pi$ is a sequential algorithm for a multi-armed bandit problem \citep{auer2002finite}, 
    \item when $I^v_t \in \partial f_i(x^v_t)$ for some convex function $f(x)$, $\pi$ belongs to the black-box procedure for distributed convex optimization over a simplex \citep{scaman2019optimal}, and
    \item when $\mathcal{G}$ is a star graph, $\pi$ is a centralized federated bandit algorithm discussed in \cite{reda2022near}.
\end{itemize}



\section{Lower bounds} \label{sec:lower-bound}

In this section, we show two lower bounds on the cumulative regret of any federated bandit algorithm $\pi$ in which all agents exchange their messages through the communication network $\mathcal{G} = \left(\mathcal{V}, \mathcal{E}\right)$, for full-information and bandit feedback setting. 
Both lower bounds highlight how the cumulative regret of the federated algorithm $\pi$ is related to the minimum time it takes for all agents in $\mathcal{G}$ to reach an agreement on a globally best arm.

Agents reaching an agreement about a globally best arm in hindsight is to find $i^\ast \in \arg\min_{i\in \mathcal{A}} \sum_{t=1}^T \Bar{\ell}_t(i)$ by each agent $v$ exchanging their private information about $\{\sum_{t=1}^T \ell^v_t(i): i\in \mathcal{A}\}$ with their neighbors. 
This is known as a distributed consensus averaging problem \citep{boyd2004fastest}. 
Let $d_v = |\mathcal{N}(v)|$ and $d_{\max} = \max_{v\in\mathcal{V}} d_v$ and $d_{\min} = \min_{v\in\mathcal{V}} d_v$ .
The dynamics of a consensus averaging procedure is usually characterized by spectrum of the Laplacian matrix $M$ of graph $\mathcal{G}$ defined as
$$
M_{u, v}:= \begin{cases} d_{u} & \text { if } u=v \\ -1 & \text { if } u \neq v \text { and } (u,v)\in \mathcal{E} 
\\ 0 & \text { otherwise}.\end{cases}
$$
Let $\lambda_1(M)\geq \cdots\geq \lambda_N(M) = 0$ be the eigenvalues of the Laplacian matrix $M$. 
The second smallest eigenvalue $\lambda_{N-1}(M)$ is the \emph{algebraic connectivity} which approximates the sparest-cut of graph $\mathcal{G}$ \citep{arora2009expander}.
In the following two theorems, we show that for any federated bandit algorithm $\pi$, there always exists a problem instance and an agent whose worst-case cumulative regret is $\Omega(\lambda_{N-1}(M)^{-1/4} \sqrt{T})$.

\begin{theorem}[Full-information feedback]
    \label{thm:lower-bound-full}For any federated bandit algorithm $\pi$, there exists a graph $\mathcal{G}=(\mathcal{V}, \mathcal{E})$ with Laplacian matrix $M$ and a full-information feedback instance $L\in [0, 1]^{T\times N\times K}$ such that for some $v_1 \in \mathcal{V}$,
    \begin{equation}
        \label{lower-bound:full-feedback}
        R^{v_1}_T(\pi, L) = \Omega \left(\sqrt[4]{\frac{1+d_{\max}}{\lambda_{N-1}(M)}}\sqrt{T\log K}\right).
    \end{equation}
\end{theorem}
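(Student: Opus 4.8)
The plan is to prove the bound by exhibiting a single well-chosen graph and reducing agent $v_1$'s learning problem to a full-information experts problem with \emph{delayed feedback}, for which an $\Omega(\sqrt{\tau\, T\log K})$ lower bound (with $\tau$ the delay) can be established. I would take $\mathcal{G}$ to be a path $v_1 - v_2 - \cdots - v_N$. For this graph $d_{\max} = 2$, the diameter is $D = N-1$, and the algebraic connectivity is $\lambda_{N-1}(M) = 2(1-\cos(\pi/N)) = \Theta(1/N^2)$, so that $D = \Theta(\sqrt{(1+d_{\max})/\lambda_{N-1}(M)})$ and the target rate $\sqrt[4]{(1+d_{\max})/\lambda_{N-1}(M)}\,\sqrt{T\log K}$ equals $\Theta(\sqrt{D\,T\log K})$ up to constants. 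Hence it suffices to force $v_1$ to incur regret $\Omega(\sqrt{D\,T\log K})$. (More elaborate clustered graph families can be used to display the full dependence on $d_{\max}$, but the path already realizes the stated rate because here $1+d_{\max}$ is a constant.)

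Next I would design the oblivious loss tensor so that the signal survives the $1/N$ averaging yet reaches $v_1$ only with delay $\Theta(D)$. I would place informative losses on the $\Theta(N)$ far agents $\{v_{\lfloor N/2\rfloor+1}, \ldots, v_N\}$, each at distance $\ge N/2$ from $v_1$, and assign the constant loss $\tfrac12$ to the remaining agents. Because a constant fraction of agents carry the signal, $\bar\ell_t(i)$ retains a $\Theta(1)$ separation between arms per step, while, by Definition~\ref{def:federated}, $p^{v_1}_t$ depends on any far agent $u$'s feedback only through $\mathcal{F}^u_{t-d(u,v_1)-1}\subseteq \mathcal{F}^u_{t-\lfloor N/2\rfloor-1}$. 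Thus $v_1$ faces these losses as an experts problem with feedback delay $\tau = \Theta(D)$.

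The core step is the delayed-experts lower bound, obtained by lifting the classical experts lower bound to the time scale of the delay. I would partition $[T]$ into $m = \lfloor T/\tau\rfloor$ super-rounds of $\tau$ consecutive steps, keep the far agents' loss vector \emph{constant within each super-round}, and draw it \emph{independently across super-rounds} (each arm's per-super-round loss an independent fair coin in $\{0,1\}$). By the delay bookkeeping of the previous paragraph, every action $v_1$ takes inside super-round $j$ is measurable with respect to feedback from super-rounds $\le j-1$, which is statistically independent of super-round $j$'s losses; hence $v_1$'s expected cumulative loss is exactly the per-arm mean $\tfrac{\tau m}{2}$ (up to the fixed additive offset from the near agents), independent of its strategy. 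On the other hand, the globally best arm in hindsight is the minimum over $K$ of $m$ independent super-round sums, which by standard anti-concentration of maxima of $K$ random walks lies $\Omega(\tau\sqrt{m\log K})$ below the mean. Subtracting gives regret $\Omega(\tau\sqrt{m\log K}) = \Omega(\sqrt{\tau\,T\log K}) = \Omega(\sqrt{D\,T\log K})$; since the losses are fixed in advance (oblivious), the probabilistic method yields a single realization $L$ attaining the bound.

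The main obstacle is making the reduction from the federated constraint to a clean feedback delay fully rigorous: I must verify, purely from Definition~\ref{def:federated}, that $p^{v_1}_t$ is conditionally independent of the far cluster's loss realizations in the most recent $\Theta(D)$ steps, so that no indirect inference routed through the intermediate agents leaks the current super-round faster than the shortest-path delay permits. I must also check that the constant-$\tfrac12$ losses on the near agents neither shift the global $\arg\min$ nor distort the comparator, and confirm that the anti-concentration term survives the $1/N$ averaging. Assembling the delay accounting, the averaging, and the order-statistics bound so that they combine into precisely the fourth-root exponent is where the care lies; the remainder is a direct reduction to the classical experts lower bound.
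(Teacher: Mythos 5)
Your proposal is correct and its engine is the same as the paper's: place informative losses only on a cluster of agents far from $v_1$, give uninformative (zero or constant) losses to everyone else so the signal survives the $1/N$ averaging, and exploit the communication delay to reduce $v_1$'s problem to a full-information experts problem on $T/\Theta(D)$ effective rounds, where the blocked i.i.d.\ coin-flip construction yields $\Omega(\sqrt{D\,T\log K})$. The differences are in the supporting lemmas rather than the strategy. First, you instantiate the graph as a path, for which $d_{\max}$ is constant and $\lambda_{N-1}(M)=2(1-\cos(\pi/N))=\Theta(1/N^2)$, so the stated rate reduces to $\Theta(\sqrt{D\,T\log K})$; the paper instead imports Lemma~24 of Scaman et al.\ (its Lemma~\ref{lm:delta-g}) to obtain two clusters of size $N/4$ at distance $\Omega(\sqrt{(1+d_{\max})/\lambda_{N-1}(M)})$ in a more general graph family. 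Since the theorem is an existence statement over graphs, your path suffices and is more self-contained, though it does not exhibit the bound for graphs with large $d_{\max}$. Second, the paper handles the delay by relaxing to ``cluster-based'' algorithms (Lemma~\ref{lm:monotone}), which grants the learner strictly more information and thereby sidesteps the question of whether information can be relayed through intermediate agents faster than the shortest path allows; you instead argue directly from Definition~\ref{def:federated}, which forces you to carry out the induction you flag as the main obstacle (relaying through an intermediary $w$ incurs delay $d(v_1,w)+d(w,u)+2\ge d(v_1,u)+2$, so it never beats the direct delay, and all histories in $\bigcup_u\mathcal{F}^u_{t-d(u,v_1)-1}$ are measurable with respect to far-cluster losses up to time $t-\lfloor N/2\rfloor-1$ plus internal randomness). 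That induction is true and routine but must be written out. Third, you prove the blocked lower bound from scratch via anti-concentration of the minimum of $K$ independent binomial sums, whereas the paper's Lemma~\ref{lm:delay} reduces to the classical experts lower bound; these are the same argument in substance, and both carry the same implicit asymptotic requirement $T/\tau\gg\log K$.
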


The proof, in Appendix~\ref{app-lower-full}, relies on the existence of a graph in which there exist two clusters of agents, $A$ and $B$, with distance $d(A, B) = \min_{u\in A, v\in B}d(u, v) = \Omega\left(\sqrt{(d_{\max}+1)/\lambda_{N-1}(M)}\right)$. Then, we consider an instance where only agents in cluster $A$ receive non-zero losses. 
Based on a reduction argument, the cumulative regrets for agents in cluster $B$ are the same as (up to a constant factor) the cumulative regret in a single-agent adversarial bandit problem with feedback of delay $d(A, B)$ (see Lemma~\ref{lm:delay} in Appendix~\ref{app-lower-full}). Hence, one can show that the cumulative regret of agents in cluster $B$ is $\Omega \left(\sqrt{d(A,B)}\sqrt{T\log K}\right)$. 

Note that the doubly adversarial federated bandit with full-information feedback is a special case of distributed online linear optimization, with the decision set being a $K-1$-dimensional simplex. Hence, Theorem~\ref{thm:lower-bound-full} immediately implies a regret lower bound for the distributed online linear optimization problem. To the best of our knowledge, this is the first non-trivial lower bound that relates the hardness of distributed online linear optimization problem to the algebraic connectivity of the communication network.

Leveraging the lower bound for the full-information setting, we show a lower bound for the bandit feedback setting.

\begin{theorem}[Bandit feedback]
    \label{thm:lower-bound-bandit}
    For any federated bandit algorithm $\pi$, there exists a graph $\mathcal{G}=(\mathcal{V}, \mathcal{E})$ with Laplacian matrix $M$ and a bandit feedback instance $L\in [0, 1]^{T\times N\times K}$ such that for some $v_1 \in \mathcal{V}$,
    \begin{equation}
        \label{lower-bound:bandit-feedback}
        R^{v_1}_T(\pi, L)= \Omega\left( \max\left\{ \sqrt{ \frac{1}{1+d_{v_1}}}\sqrt{KT}, \sqrt[4]{\frac{1+d_{\max}}{\lambda_{N-1}(M)} } \sqrt{T\log K} \right\} \right).
    \end{equation}
\end{theorem}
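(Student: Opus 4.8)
The plan is to establish the bound as the maximum of two independent lower bounds, each obtained from its own construction; since $\max_L R_T^{v_1}(\pi,L)$ dominates the regret produced by any single construction, it suffices to prove each term separately and then take whichever is larger. The second term, $\sqrt[4]{(1+d_{\max})/\lambda_{N-1}(M)}\sqrt{T\log K}$, I would inherit directly from the full-information lower bound of Theorem~\ref{thm:lower-bound-full}. The key observation is that bandit feedback is strictly less informative than full-information feedback: given any federated bandit algorithm $\pi$, define a full-information algorithm $\pi'$ that discards every coordinate of $I_t^v=\ell_t^v$ except $\ell_t^v(a_t^v)$ and then runs $\pi$. Then $\pi'$ respects the same communication and delay constraints of Definition~\ref{def:federated} and produces identical actions, hence identical regret, to $\pi$ on every instance. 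Applying Theorem~\ref{thm:lower-bound-full} to $\pi'$ yields a graph, an instance, and an agent $v_1$ whose regret is at least the second term, and by construction this equals the regret of $\pi$ under bandit feedback.

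For the first term, $\sqrt{KT/(1+d_{v_1})}$, I would take $\mathcal{G}$ to be a star with center $v_1$ and $d_{v_1}=N-1$ leaves, so that $N=1+d_{v_1}$ and there is no dilution of the global loss gap. Following the standard stochastic-to-adversarial reduction for bandits \citep{auer2002finite}, I would plant a hidden arm: for each arm $j$ define an instance in which every agent's losses are i.i.d.\ Bernoulli with mean $1/2-\epsilon\,\mathbf{1}(i=j)$, together with a reference instance where all means equal $1/2$. Drawing $j$ uniformly and comparing against the planted arm gives $R_T^{v_1}\ge \epsilon\,\mathbb{E}[T-N_j^{v_1}]$, where $N_j^{v_1}$ counts the pulls of $j$ by $v_1$; averaging over $j$ and using $\max_L\ge$ average reduces the task to upper bounding $\frac1K\sum_j\mathbb{E}_{(j)}[N_j^{v_1}]$.

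The information-theoretic core is a change of measure between the reference and planted instances over the \emph{joint} trajectory of all agents. Because the two instances differ only in arm $j$, the chain rule gives $\mathrm{KL}\big(P_{(0)}\,\|\,P_{(j)}\big)=\mathbb{E}_{(0)}[S_j]\cdot\mathrm{kl}(\tfrac12,\tfrac12-\epsilon)=O(\epsilon^2)\,\mathbb{E}_{(0)}[S_j]$, where $S_j$ is the total number of times arm $j$ is pulled across all $N$ agents; delays only coarsen $v_1$'s information, so charging all system pulls is a valid and generous bound. Since $\sum_j S_j=NT$, Cauchy--Schwarz yields $\sum_j\sqrt{\mathbb{E}_{(0)}[S_j]}\le\sqrt{K\,NT}$, and Pinsker together with $\sum_j N_j^{v_1}=T$ gives $\frac1K\sum_j\mathbb{E}_{(j)}[N_j^{v_1}]\le T/K+T\epsilon\,O(\sqrt{NT/K})$. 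Substituting and optimizing $\epsilon\asymp\sqrt{K/(NT)}$ produces $R_T^{v_1}=\Omega(\sqrt{KT/N})=\Omega(\sqrt{KT/(1+d_{v_1})})$.

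I expect the main obstacle to be the accounting in this last step: the regret is charged to $v_1$'s \emph{individual} pulls through $\epsilon\,\mathbb{E}[T-N_j^{v_1}]$, whereas the discriminating information available to $v_1$ is governed by the \emph{aggregate} pull count $S_j$ over all $1+d_{v_1}$ agents of the star. Making this asymmetry precise—so that the shared exploration of the neighbors appears exactly as the $1/(1+d_{v_1})$ improvement over the single-agent $\sqrt{KT}$ bound, while ensuring that the comparator gap (comparing $\min_i\sum_t\bar{\ell}_t(i)$ against the planted arm) and the delay-induced coarsening do not inflate the constants—is the delicate part. The remaining pieces, namely Pinsker's inequality, the Bernoulli KL estimate, and the choice of $\epsilon$, are routine.
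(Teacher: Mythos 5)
Your decomposition into two separately-proved terms, and your treatment of the second term (bandit feedback is a restriction of full-information feedback, so Theorem~\ref{thm:lower-bound-full} carries over), is exactly what the paper does. For the first term, however, you take a genuinely different route. The paper does not build an explicit planted-arm instance: it observes (via the monotonicity Lemma~\ref{lm:monotone}) that any federated algorithm is a centralized cluster-based algorithm in which agent $v_1$ receives $O(|\mathcal{N}(v_1)|)$ "bits" per round, and then invokes Theorem~4 of \cite{shamir2014fundamental} on information-constrained online learning as a black box to get $\Omega(\min\{T,\sqrt{KT/(1+d_{v_1})}\})$. Your argument is more elementary and self-contained --- a standard change of measure over the joint trajectory, with the KL charged to the aggregate pull count $S_j$ and Cauchy--Schwarz over arms --- and your accounting (Pinsker, $\sum_j S_j = NT$, $\epsilon \asymp \sqrt{K/(NT)}$, and handling the comparator via $\mathbb{E}[\min_i \sum_t \bar\ell_t(i)] \le \min_i \mathbb{E}[\sum_t \bar\ell_t(i)]$ together with the independence of $a_t^{v_1}$ from the fresh losses at time $t$) goes through. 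What you lose is generality in the graph: charging all $NT$ system pulls yields $\sqrt{KT/N}$, which equals $\sqrt{KT/(1+d_{v_1})}$ only on your star, whereas Shamir's per-round information bound is stated in terms of $d_{v_1}$ on an arbitrary graph. Consequently your two terms are realized on \emph{different} graphs (the star for the first, the expander-like graph of Lemma~\ref{lm:delta-g} for the second), while the paper obtains both bounds on the single graph inherited from Theorem~\ref{thm:lower-bound-full}, which is what the theorem statement --- one graph, with the max over instances --- literally requires. To repair this you would either need to run your planted-arm argument on that graph and accept the weaker $\sqrt{KT/N}$, or restrict the change of measure to the observations actually reaching $v_1$ through its neighborhood, which is essentially what the appeal to \cite{shamir2014fundamental} accomplishes. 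This is a technical rather than conceptual defect, but it is the one place where your proposal proves something slightly weaker than the stated theorem.
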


The proof is provided in Appendix~\ref{app:lower-bandit}. The lower bound contains two parts. The first part, derived from the information-theoretic argument in \cite{shamir2014fundamental},  captures the effect from bandit feedback. The second part is inherited from Theorem~\ref{thm:lower-bound-full} by the fact that the regret of an agent in bandit feedback setting cannot be smaller than its regret in full-information setting.

\section{\fedexp: a federated regret-minimization algorithm} 
\label{sec:upper-bound}


Inspired by the fact that the cumulative regret is related to the time need to reach consensus about a globally best arm,  we introduce a new federated bandit algorithm based on the gossip communication mechanism, called \fedexp. The details of \fedexp\ are described in Algorithm~\ref{algo:FedExp3}. We shall also show that \fedexp\ has a sub-linear cumulative regret upper bound which holds for all agents simultaneously.

The \fedexp\ algorithm is adapted from the Exp3 algorithm, in which each agent $v$ maintains an estimator $z^v_t \in \mathbb{R}^K$ of the cumulative losses for all arms and a tentative action selection distribution  $x^v_t\in[0, 1]^K$.
At the beginning of each time step $t$, each agent follows the action selection distribution $x^v_t$ with probability $1-\gamma_t$, or performs a uniform random exploration with probability $\gamma_t$. Once the action $a^v_t$ is sampled, the agent observes the associated loss $\ell^v_t(a^v_t)$ and then computes an importance-weighted loss estimator $g^v_t  \in \mathbb{R}^K$ using the sampling probability $p^v_t(a^v_t)$. Before $g^v_t$ is integrated into the cumulative loss estimator $z^v_{t+1}$, the agent communicates with its neighbors to average its cumulative loss estimator $z^v_t$.
\begin{algorithm}[h]
\caption{\fedexp}
\label{algo:FedExp3}
\SetKwInOut{Init}{Initialization}
\SetKwInOut{Input}{Input}
\Input{Non-increasing sequence of learning rates $\{\eta_t>0\}$ , non-increasing sequence of exploration ratios $\{\gamma_t>0\}$, and a gossip matrix $W\in [0, 1]^{N\times N}$.} 
\Init{$z_1^v(i) = 0, x^v_1(i)=1/K$ for all $i\in \mathcal{A}$ and $v\in \mathcal{V}$.}
\For{each time step $t=1,2,\dots, T$}{
    \For{each agent $v\in \mathcal{V}$}{
        compute the action distribution $p_t^v(i) = (1-\gamma_t) x^v_t(i) + \gamma_t/K$\;
        choose the action $a_t^v\sim p_t^{v}$\;
        compute the loss estimators
        $
        g^v_t(i) = \ell^v_t(i) \mathbb{I}\left\{a^v_t = i \right\} / p^v_t(i) 
        $ for all $i\in \mathcal{A}$\;
        update the gossip accumulative loss $z_{t+1}^v = \sum_{u: (u,v)\in \mathcal{E}} W_{u,v} z_t^u + g_t^v$\;
        update the exploitation distribution $x^v_{t+1} = \frac{\exp\{-\eta_t z_{t+1}^v(i)\}}{\sum_{j\in A} \exp\{-\eta_t z_{t+1}^v(j)\}}$\;
    }
}
\end{algorithm}

The communication step is characterized by the \emph{gossip} matrix which is a doubly stochastic matrix $W\in [0, 1]^{N\times N}$ satisfying the following constraints
\begin{equation*}
    \sum_{v\in \mathcal{V}}W_{u,v} = \sum_{u\in \mathcal{V}}W_{u,v} = 1 \quad \text{and} \quad W_{u,v} \geq 0 \quad\text{where equality holds when } (u,v)\notin \mathcal{E}.  
\end{equation*}
This gossip communication step facilitates the agents to reach a consensus on the estimators of the cumulative losses of all arms, and hence allows the agents to identify a globally best arm in hindsight.
We present below an upper bound on the cumulative regret of each agent in \fedexp.
\begin{theorem}
\label{thm:upper-bound:static}
Assume that the network runs Algorithm~\ref{algo:FedExp3} with
$$
\gamma_t = \sqrt[3]{\frac{\left(C_W + \frac{1}{2}\right)K^2\log K}{t}}
\quad
\text{and}
\quad
\eta_t = \frac{\log K}{T \gamma_T} = \sqrt[3]{\frac{(\log K)^2}{ \left(C_W + \frac{1}{2}\right) K^2 T^2}}
$$
with
$
C_W = \min\{2\log T + \log N, \sqrt{N}\}/(1-\sigma_2(W)) +3.
$
Then, the expected regret of each agent $v\in \mathcal{V}$ is bounded as
$$
R_T^v = \Tilde{O} \left( \frac{1}{\sqrt[3]{1-\sigma_2(W)}} K^{2/3} T^{2/3}\right) 
$$
where $\sigma_2(W)$ is the second largest singular value of $W$. 
\end{theorem}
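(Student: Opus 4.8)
The plan is to control each agent's regret by comparing its sampling law $p_t^v$ against a single \emph{virtual centralized} iterate and then paying separately for three effects: the forced uniform exploration, the disagreement between agent $v$ and the network, and the regret of an idealized dual-averaging process run on the averaged loss estimators. Writing $\bar z_t = \frac1N\sum_u z_t^u$ and $\bar g_t = \frac1N\sum_u g_t^u$, double stochasticity of $W$ collapses the gossip step into the clean recursion $\bar z_{t+1} = \bar z_t + \bar g_t$, so I define the virtual distribution $\bar x_t = \mathrm{softmax}(-\eta_{t-1}\bar z_t)$. Since $R_T^v = \sum_t\langle p_t^v,\bar\ell_t\rangle - \min_i\sum_t\bar\ell_t(i)$, I write $p_t^v - e_{i^\star} = (p_t^v-x_t^v) + (x_t^v-\bar x_t) + (\bar x_t - e_{i^\star})$ and treat the three groups in turn, where $i^\star\in\arg\min_i\sum_t\bar\ell_t(i)$.

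The exploration gap is immediate: $\langle p_t^v-x_t^v,\bar\ell_t\rangle = \gamma_t\langle\mathbf{1}/K - x_t^v,\bar\ell_t\rangle\le\gamma_t$ because $\bar\ell_t\in[0,1]^K$, contributing $\sum_t\gamma_t$. For the centralized block I use that $\bar x_t$ is $\mathcal F_{t-1}$-measurable and that the importance-weighted estimator is unbiased, $\mathbb E[\bar g_t(i)\mid\mathcal F_{t-1}] = \bar\ell_t(i)$, so in expectation the regret against $\bar\ell_t$ equals the regret against $\bar g_t$; the latter is handled by the standard exponential-weights inequality $\sum_t\langle\bar x_t - e_{i^\star},\bar g_t\rangle\le\frac{\log K}{\eta_T} + \sum_t\eta_t\sum_i\bar x_t(i)\bar g_t(i)^2$, valid since $\bar g_t\ge0$. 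The delicate point is that the second-order term cannot be cancelled as in single-agent Exp3, because the consensus law $\bar x_t$ is \emph{not} any agent's sampling law $p_t^v$; taking conditional expectations, the cross-agent terms of $\bar g_t(i)^2$ contribute $O(1)$ while the diagonal terms feed in the heavy estimator magnitudes $g_t^v(i)\le K/\gamma_t$ and second moments summing to $O(K^2/\gamma_t)$ across arms.

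The crux is the consensus term $\sum_t\langle x_t^v-\bar x_t,\bar\ell_t\rangle\le\sum_t\|x_t^v-\bar x_t\|_1$. Using that $z\mapsto\mathrm{softmax}(-\eta z)$ is $O(\eta)$-Lipschitz from $\ell_\infty$ to $\ell_1$, this is $O\!\left(\sum_t\eta_{t-1}\|z_t^v-\bar z_t\|_\infty\right)$, so I must bound the drift of each accumulator from the network average. Stacking the per-arm coordinate $\mathbf{z}_t(i)$ and projecting with $\Pi = I-\frac1N\mathbf{1}\mathbf{1}^\top$, the gossip step gives $\Pi\mathbf{z}_{t+1}(i) = W^\top\Pi\mathbf{z}_t(i) + \Pi\mathbf{g}_t(i)$ (using that $W^\top$ commutes with $\Pi$), whence $\Pi\mathbf{z}_t(i) = \sum_{s<t}(W^\top)^{t-1-s}\Pi\mathbf{g}_s(i)$. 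The operator norm of $(W^\top)^k\Pi$ contracts as $\sigma_2(W)^k$, so summing the mixing weights over time telescopes to the factor $\frac{1}{1-\sigma_2(W)}$; converting the $\ell_2$ contraction to the relevant norm (either crudely via $\|\cdot\|_1\le\sqrt N\|\cdot\|_2$, or by a sharper row-wise estimate) and injecting the per-step magnitude $K/\gamma_s$ produces exactly the network constant $C_W$, whose two competing estimates explain the $\min\{2\log T+\log N,\sqrt N\}$ appearing in its definition.

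I expect this consensus-drift estimate to be the main obstacle, since it is the genuinely new, distributed part of the argument: it couples the spectral gap $1-\sigma_2(W)$ of the gossip matrix with the heavy tails of the importance-weighted estimators, and the bookkeeping that yields the precise $C_W$ (including its $\min$ form) is the most error-prone step; everything else is a careful adaptation of Exp3/dual averaging. Assembling the three blocks, the expected regret is $O\!\big(\sum_t\gamma_t + \frac{\log K}{\eta_T} + C_W\sum_t\eta_t\cdot\text{(estimator magnitude)}/\gamma_t\big)$. Substituting $\gamma_t\propto\big((C_W+\tfrac12)K^2\log K/t\big)^{1/3}$ and $\eta_t = \log K/(T\gamma_T)$ makes the exploration bias $\sum_t\gamma_t\asymp T\gamma_T$ match the leading term $\frac{\log K}{\eta_T} = T\gamma_T\asymp C_W^{1/3}K^{2/3}T^{2/3}$ while dominating the remaining variance and consensus contributions; since $C_W^{1/3}\asymp(1-\sigma_2(W))^{-1/3}$ up to lower-order factors, this gives $R_T^v = \tilde{O}\big((1-\sigma_2(W))^{-1/3}K^{2/3}T^{2/3}\big)$ uniformly over all agents $v$.
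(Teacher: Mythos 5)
Your proposal is correct and follows essentially the same route as the paper: the same three-way split into a forced-exploration term, a consensus/disagreement term controlled via the $O(\eta)$-Lipschitzness of the softmax map and the $\sigma_2(W)^k$ gossip contraction (yielding the same constant $C_W$), and a dual-averaging/FTRL term on the averaged unbiased estimators $\bar g_t$, followed by the identical tuning of $\gamma_t$ and $\eta_t$. The only (immaterial) differences are that you pair the disagreement term with $\bar\ell_t$ rather than with $f_t=\bar g_t$ (which actually saves a factor of $K$ in that subdominant term) and that you re-derive the network-disagreement bound from first principles where the paper cites \cite{duchi2011dual} and \cite{hosseini2013online}.
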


\paragraph{Proof sketch}
Let $\hat{\ell}_t$ and $\bar{z}_t$ be the average instant loss estimator and average cumulative loss estimator, 
\begin{equation*}
    f_t = \frac{1}{N}\sum_{v\in \mathcal{V}} g^v_t \quad \text{and} \quad \bar{z}_t = \frac{1}{N}\sum_{v\in \mathcal{V}} z^v_t,
\end{equation*}
and let $y_t$ be action distribution that minimizes the regularized average cumulative loss estimator
$$
y_t(i) = \frac{\exp\{-\eta_{t-1} \bar{z}_t(i)\}}{\sum_{j\in A} \exp\{-\eta_{t-1} \bar{z}_t(j)\}}.
$$
The cumulative regret can be bounded by the sum of three terms
\begin{equation*}
    R^v_t \leq 
    \underbrace{\mathbb{E}\left[\sum_{t=1}^T \left(\langle f_t, y^v_t \rangle - f_t (i^\ast) \right)\right]}_{\operatorname{FTRL}} + 
      K \underbrace{\sum_{t=1}^T \eta_{t-1} \mathbb{E}\|z^{v}_t - \bar{z}_t\|_\ast}_{\operatorname{CONSENSUS}} + \underbrace{\sum_{t=1}^T \gamma_t}_{\operatorname{EXPLORATION}}
\end{equation*}
where $i^\ast \in \arg\min_{i\in \mathcal{A}} \sum_{t=1}^T \Bar{\ell}_t(i)$ is a globally best arm in hindsight.

The $\operatorname{FTRL}$ term is a typical regret term from the classic analysis for the Follow-The-Regularized-Leader algorithm \citep{lattimore_szepesvari_2020}. The $\operatorname{CONSENSUS}$ term measures the cumulative approximation error generated during the consensus reaching process, which can be bounded using the convergence analysis of distributed averaging algorithm based on doubly stochastic matrices \citep{duchi2011dual, hosseini2013online}. The last $\operatorname{EXPLORATION}$ term can be bounded by specifying the time-decaying exploration ratio $\gamma_t$.
The full proof of Theorem~\ref{thm:upper-bound:static} is provided in Appendix~\ref{proof:upper-bound}.

The \fedexp\ algorithm is also a valid algorithm for the multi-agent adversarial bandit problem \citep{cesa2016delay} which is a special case of the doubly adversarial federated bandit problem when $\ell^v_t(i) = \ell_t(i)$ for all $v\in \mathcal{V}$. According to the distributed consensus process of \fedexp, each agent $v\in \mathcal{V}$ only communicates cumulative loss estimator values $z^v_t$, instead of the actual loss values $\ell^v_t(a^v_t)$, and the selection distribution $p^v_t$.
\fedexp\ can guarantee a sub-linear regret without the exchange of sequences of selected arm identities or loss sequences of agents, which resolves an open question raised in \cite{cesa2016delay}.

\paragraph{Choice of the gossip matrix} 
The gossip matrix $W$ can be constructed using the \emph{max-degree} trick in \cite{duchi2011dual}, i.e.,
$$
W  = I-\frac{D-A}{2(1+d_{\max})}
$$
where $D = \operatorname{diag}(d_1, \dots, d_N)$ and $A$ is the adjacency matrix of $\mathcal{G}$. This construction of $W$ requires that all agents have knowledge of the maximum degree $d_{\max}$, which can indeed be easily computed in a distributed system by nodes exchanging messages and updating their states using the maximum reduce operator.


Another choice of $W$ comes from the effort to minimize the cumulative regret.
The leading factor $1/\sqrt[3]{1-\sigma_2(W)}$ in the regret upper bound of \fedexp\ can be minimized by choosing a gossip matrix $W$ with smallest $\sigma_2(W)$. Suppose that the agents have knowledge of the topology structure of the communication graph $\mathcal{G}$, then the agents can choose the gossip matrix to minimize their regret by solving the following convex optimization problem:
\begin{equation}
\begin{array}{rl}
\operatorname{minimize} & \left\|W-(1 / n) \mathbf{1 1}^T\right\|_2 \nonumber \\
\text { subject to} & W \geq 0, W \mathbf{1}=\mathbf{1}, W=W^T, \hbox{ and } W_{i j}=0, \hbox{ for } (i, j) \notin \mathcal{E}  \end{array}
\end{equation}
which has an equivalent semi-definite programming formulation as noted in \cite{boyd2004fastest}.

\paragraph{Gap between upper and lower bounds} 
 The regret upper bound of \fedexp\ algorithm in Theorem~\ref{thm:upper-bound:static} grows sublinearly in the number of arms $K$ and horizon time $T$. There is only a small polynomial gap between the regret upper bound and the lower bound in Theorem~\ref{thm:lower-bound-bandit} with respect to these two parameters. The regret upper bound depends also on the second largest singular value $\sigma_2(W)$ of $W$. The related term in the lower bound in Theorem~\ref{thm:lower-bound-bandit} is the second smallest eigenvalue $\lambda_{N-1}(M)$ of the Laplacian matrix $M$. To compare these two terms, we point that 
 when the gossip matrix is constructed using the max-degree method, as discussed in Corollary 1 in \cite{duchi2011dual},
 $$ \frac{1}{\sqrt[3]{1- \sigma_2(W)}} \leq \sqrt[3]{2\frac{d_{\max }+1}{\lambda_{N-1}(M)}}.$$ 
 With respect to $\sqrt[4]{(d_{\max}+1)/\lambda_{N-1}(M)}$ in Theorem~\ref{thm:lower-bound-bandit}, there is only a small polynomial gap between the regret upper bound and the lower bound. 
 We note that a similar dependence on $\sigma_2(W)$ is present in the analysis of distributed optimization algorithms  \citep{duchi2011dual, hosseini2013online}.
 
\section{Numerical experiments}\label{sec:numerical-experiments}

We present experimental results for the \fedexp\ algorithm ($W$ constructed by the max-degree method) using both synthetic and real-world datasets. 
We aim to validate our theoretical analysis and demonstrate the effectiveness of \fedexp\ on finding a globally best arm in non-stationary environments. All the experiments are performed with 10 independent runs. The code for producing our experimental results is available online in the Github repository: \href{}{[link]}.

\subsection{Synthetic datasets}

We validate our theoretical analysis of the \fedexp\ algorithm on synthetic datasets. The objective is two-fold. First, we demonstrate that the cumulative regret of \fedexp\ grows sub-linearly with time. Second, we examine the dependence of the regret on the second largest singular value of the gossip matrix. 

A motivation for finding a globally best arm in recommender systems is to provide recommendations for those users whose feedback is sparse.
In this setting, we construct a federated bandit setting in which a subset of agents will be activated at each time step and only activated agents may receive non-zero loss.
Specifically, we set $T=3,000$ with $N=36$ and $K=20$. At each time step $t$, a subset $U_t$ of $N/2$ agents are selected from $\mathcal{V}$ without replacement.
For all activated agents $U_t$, the loss for arm $i$ is sampled independently from Bernoulli distribution with mean $\mu_i = (i-1)/(K-1)$. All non-activated agents receive a loss of $0$ for any arm they choose at time step $t$.

We evaluate the performance of \fedexp\ on different networks, i.e. for a complete graph, a $\sqrt{N}$ by $\sqrt{N}$ grid network, and random geometric graphs. 
The random geometric graph RGG($d$) is constructed by uniform random placement of each node in $[0, 1]^2$ and connecting any two nodes whose distance is less or equal to $d$ \citep{penrose2003random}. 
Random geometric graphs are commonly used for modeling spatial networks.

\begin{figure}[t]
    \centering
    \includegraphics[width=.4\textwidth]{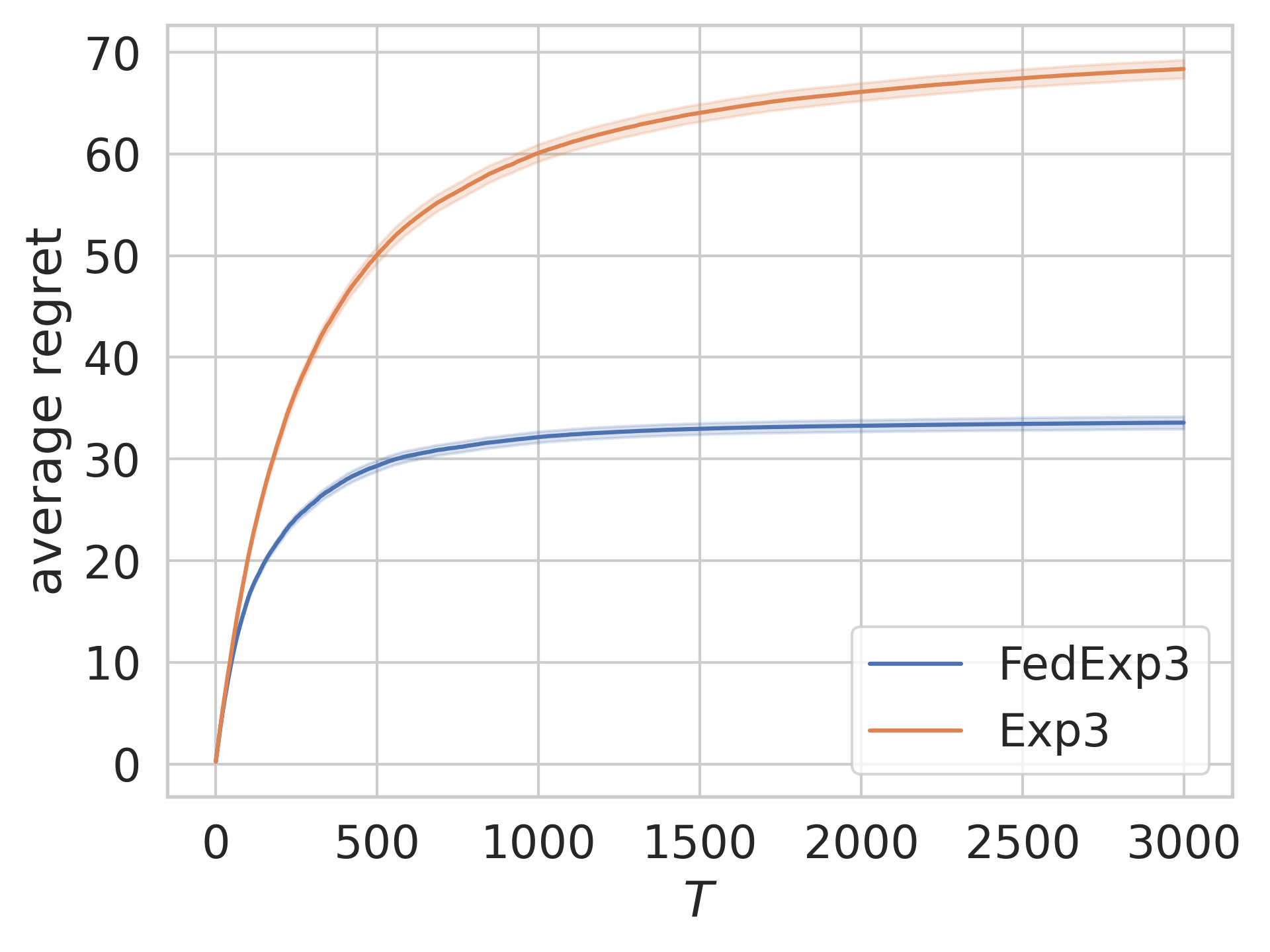}\hspace*{0.5cm}
    \includegraphics[width=.4\textwidth]{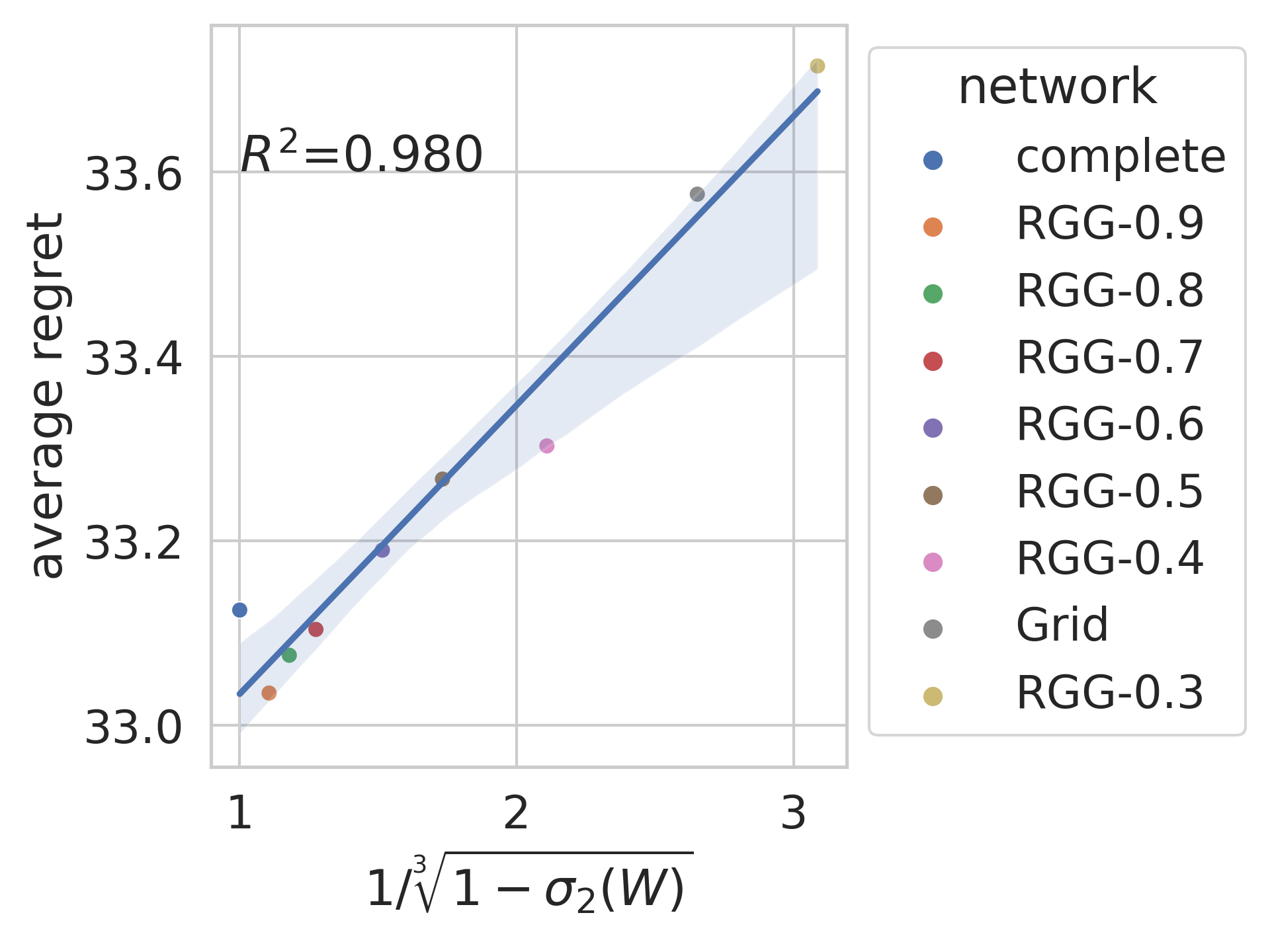}
    \caption{(Left) The average cumulative regret, i.e. $\sum_{v\in\mathcal{V}} R^v_T / N$, versus $T$, for \fedexp\ and Exp3 on the grid graph. (Right) The average cumulative regret versus $(1-\sigma_2(W))^{-1/3}$ for \fedexp\ on different networks at $T=3000$. 
    }
    \label{fig:regret}
\end{figure}

In our experiments, we set $d\in \{0.3, \dots, 0.9\}$. The results in 
Figure~\ref{fig:regret} confirm that the cumulative regret of the \fedexp\ algorithm grows sub-linearly with respect to time and suggest that the cumulative regret of \fedexp\ is proportional to $\left(1-\sigma_2(W)\right)^{-1/3}$. This is compatible with the regret upper bound in Theorem~\ref{thm:upper-bound:static}. 

\subsection{MovieLens dataset: recommending popular movie genres}

We compare \fedexp\ with a UCB-based federated bandit algorithm in a movie recommendation scenario using a real-world dataset. In movie recommendation settings, users' preferences over different genres of movies can change over time. In such non-stationary environments, we demonstrate that a significant performance improvement can be achieved by \fedexp\ against the GossipUCB algorithm (we refer to as \gucb) proposed in \cite{zhu2021federated}, which is defined for stationary settings.

We evaluate the two algorithms using the \href{https://grouplens.org/datasets/movielens/latest/}{MovieLens-Latest-full} dataset which contains 58,000 movies, classified into 20 genres, with 27,000,000 ratings (rating scores in $\{0.5, 1,\ldots, 5\}$) from 280,000 users. Among all the users, there are 3,364 users who rated at least one movie for every genre. We select these users as our agents, i.e. $N = 3,364$, and the 20 genres as the arms to be recommended, i.e. $K = 20$.

We create a federated bandit environment for movie recommendation based on this dataset. 
Let $m^v(i)$ be the number of ratings that agent $v$ has for genre $i$. We set the horizon $T= \max_{v\in \mathcal{N}} m^v(i) =12,800 $.
To reflect the changes in agents' preferences
over genres as time evolves, we sort the ratings in an increasing order by their Unix timestamps and construct the loss tensor in the following way.
Let $r^v_j (i)$ be the $j$-th rating of agent $v$ on genre $i$, the loss of recommending an movie to agent $v$ of genre $i$ at time step $t$ is defined as 
    \begin{equation*}
        \ell^v_t(i) = \frac{5.5-r^v_j (i)}{5.5} \quad \text{for} \quad t \in \left[(j-1)\left\lfloor \frac{T}{m^v(i)} \right\rfloor, j\left\lfloor \frac{T}{m^v(i)} \right\rfloor\right).
    \end{equation*}
The performance of \fedexp\ and \gucb\ is shown in Figure~\ref{fig:movie-lens}. 
The results demonstrate that \fedexp\ can outperform \gucb\ by a significant margin for different communication networks.

\begin{figure}[t]
    \centering
    \includegraphics[width=.3\textwidth]{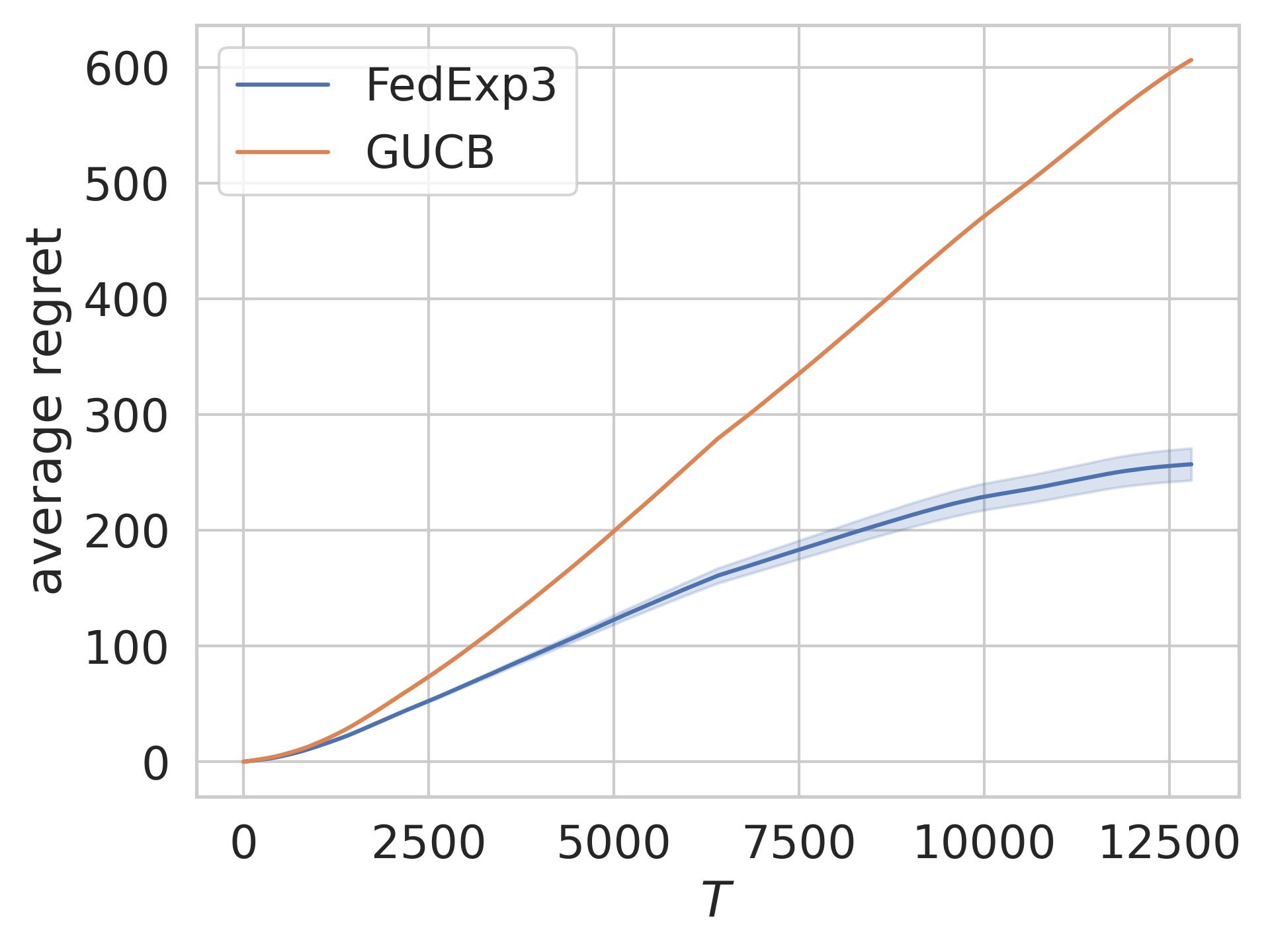}
    \includegraphics[width=.3\textwidth]{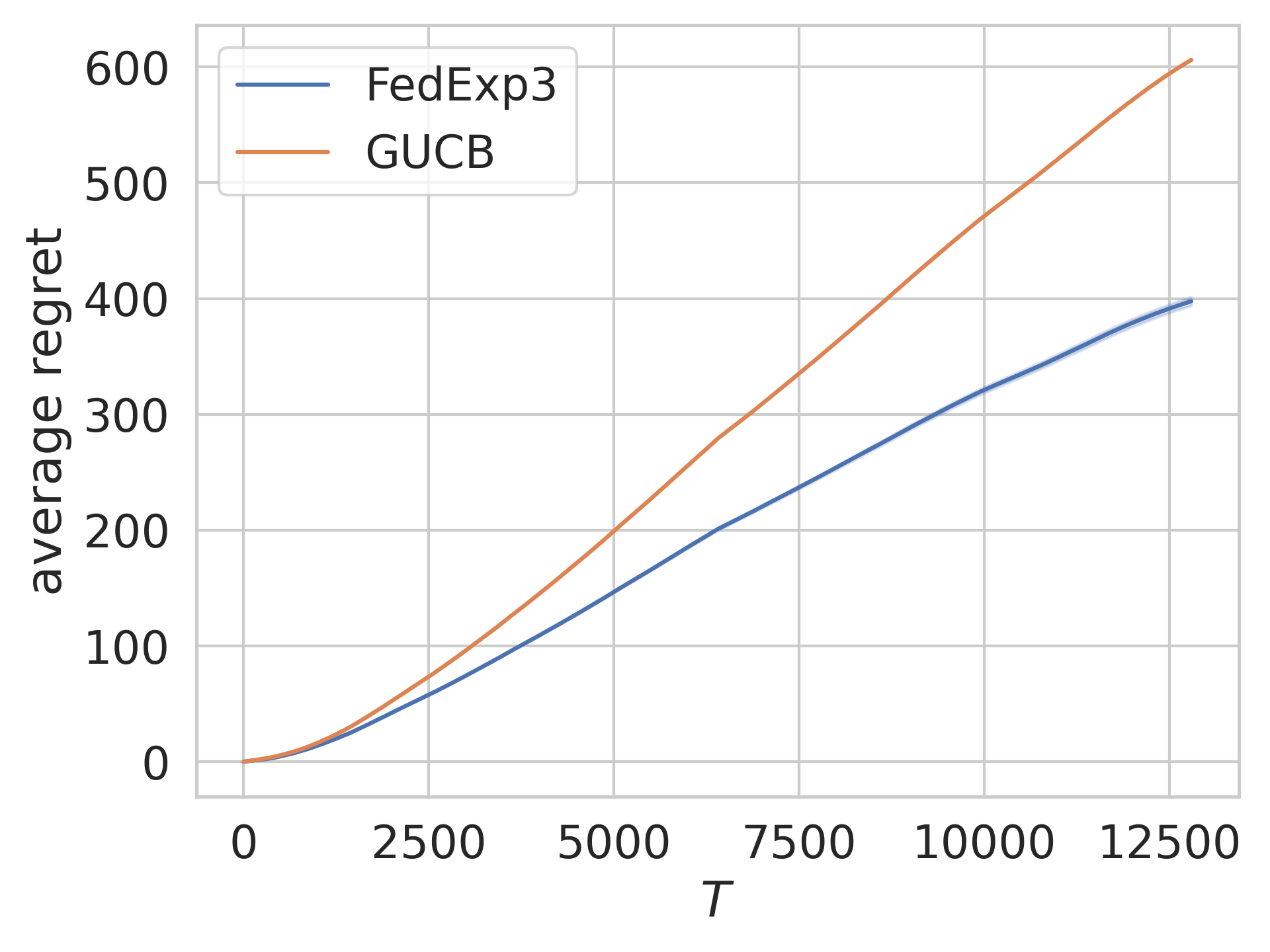}
    \includegraphics[width=.3\textwidth]{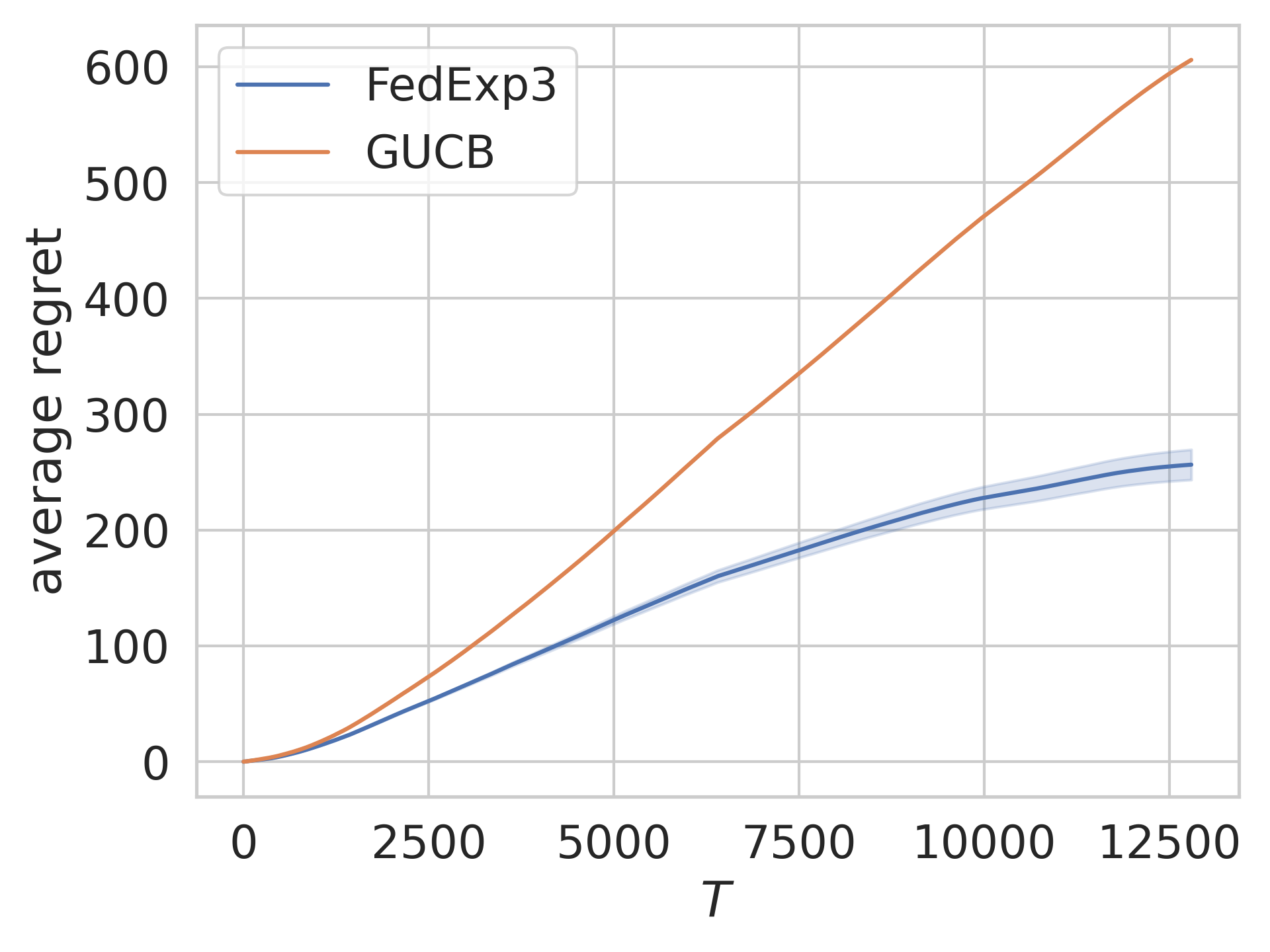}
    \caption{The average cumulative regret versus horizon time for \fedexp\ and \gucb\ in the movie recommendation setting with the communication networks: (left) complete graph, (mid) the grid network, and (right) RGG($0.5$). 
    }
    \label{fig:movie-lens}
\end{figure}

\section{Conclusion and future research}

We studied doubly adversarial federated bandits, a new adversarial (non-stochastic) setting for federated bandits, which complement prior study on stochastic federated bandits. Firstly, we derived regret lower bounds for any federated bandit algorithm when the agents have access to full-information or bandit feedback.
These regret lower bounds relate the hardness of the problem to the algebraic connectivity of the network through which the agents communicate. Then we proposed the \fedexp\ algorithm which is a federated version of the Exp3 algorithm. We showed that there is only a small polynomial gap between the regret upper bound of \fedexp\ and the lower bound. Numerical experiments performed by using both synthetic and real-word datasets demonstrated that \fedexp\ can outperform the state-of-the-art stochastic federated bandit algorithm by a significant margin in non-stationary environments.

We point out some interesting avenues for future research on doubly adversarial federated bandits. The first is to close the gap between the regret upper bound of \fedexp\ algorithm and the lower bounds shown in this paper. The second is to extend the doubly adversarial assumption to federated linear bandit problems, where the doubly adversarial assumption could replace the stochastic assumption on the noise in the linear model.

\bibliographystyle{unsrtnat}
\bibliography{references} 

\newpage
\appendix

\section{Appendix}


\paragraph{Notation} For a vector $x$, we use $x(i)$ to denote the $i$-th coordinate of $x$. We define $\mathcal{F}_{t} = \bigcup_{v\in \mathcal{V}} \mathcal{F}^v_{t}$ where $\mathcal{F}^v_t$ is the sequence of agent $v$'s actions and feedback up to time step $t$, i.e., $\mathcal{F}^v_t = \bigcup_{s=1}^t \{a^v_s, I^v_s \}$. 


\subsection{Proof of Theorem~\ref{thm:lower-bound-full}}
\label{app-lower-full}

We first define a new class of cluster-based distributed online learning procedure, referred to as \emph{cluster-based federated algorithms},  in which the delay only occurs when the communication is between different clusters.
The regret lower bound for federated bandit algorithms will be no less than the regret lower bound for cluster-based federated algorithms, as shown in Lemma~\ref{lm:monotone}. Then we show in Lemma~\ref{lm:delta-g} that there exists a special graph in which there exist two clusters of agents $A$ and $B$ with distance $d(A, B) = \min_{u\in A, v\in B}d(u, v) = \Omega\left(\sqrt{(d_{\max}+1)/\lambda_{N-1}(M)}\right)$. Then, we consider an instance where only agents in cluster $A$ receive non-zero losses. 
Based on a reduction argument, the cumulative regrets of agents in cluster $B$ are the same as (up to a constant factor) the cumulative regrets in a single-agent adversarial bandit setting with feedback delay $d(A, B)$ (see Lemma~\ref{lm:delay} in Appendix~\ref{app-lower-full}). Hence, one can show that the cumulative regret of agents in cluster $B$ is $\Omega \left(\sqrt{d(A,B)}\sqrt{T\log K}\right)$.  

We denote with $d(\mathcal{U}, \mathcal{U}^\prime)$ the smallest distance between any two nodes in $\mathcal{U}, \mathcal{U}^\prime \subset \mathcal{V}$, i.e.
$$
d(\mathcal{U}, \mathcal{U}^\prime) = \min_{u\in U, u^\prime\in \mathcal{U}^\prime} d(u, u^\prime)
$$
where $d(u,v)$ is the length of a shortest path connecting $u$ and $u^\prime$.
\begin{definition}[Cluster-based federated algorithms]
    A cluster-based federated algorithm is a multi-agent learning algorithm defined by a partition of graph $\bigcup_{r} \mathcal{U}_r = \mathcal{V}$ 
    where $\mathcal{U}_r$ is called cluster.
    In the cluster-based federated algorithm, at each round $t$, the action selection probability $p^v_t$ of agent $v\in \mathcal{U}_r$ depends on the history information up to round $t-d(\mathcal{U}_r, \mathcal{U}_{r^\prime})-1$ of all agents $u^\prime\in \mathcal{U}_{r^\prime}$.
\end{definition}

Note that when all agents are in the same cluster $\mathcal{V}$, the centralized federated algorithm in \cite{reda2022near} is an instance of a cluster-based federated algorithm.

\begin{lemma}[Monotonicity]\label{lm:monotone}
    Let $\Pi$ and $\Pi^\prime$ be two sets of all cluster-based federated algorithms with two partitions $\bigcup_{r} \mathcal{U}_r$ and $\bigcup_{s} \mathcal{U}^\prime_s$, respectively. Suppose for any cluster $\mathcal{U}^\prime_{s}$ of $\pi^\prime$, there exists a cluster $\mathcal{U}_r$ of $\pi$ such that $\mathcal{U}^\prime_{s}\subset \mathcal{U}_r$, then 
    $$ \Pi^\prime \subset \Pi \quad \text{and} \quad \min_{\pi\in\Pi} R^v_T(\pi, L ) \leq  \min_{\pi^\prime\in\Pi^\prime} R^v_T(\pi^\prime, L)$$
    for any $L\in [0, 1]^{T\times N\times K}$ and any $v\in \mathcal{V}$.
\end{lemma}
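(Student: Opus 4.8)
The plan is to show that refining the partition can only shrink the information available to each agent, so that $\Pi'$ is literally contained in $\Pi$ as a set of admissible policies; the regret inequality is then immediate, since it amounts to minimising the same objective over a larger feasible set.

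First I would make precise the information set on which an admissible policy may depend. Under a partition $\{\mathcal{U}_r\}$, an agent $v$ lying in the cluster $\mathcal{U}_r$ is permitted to choose $p^v_t$ as any measurable function of
$$
\mathcal{I}^v_t(\{\mathcal{U}_r\}) \;=\; \bigcup_{r'}\ \bigcup_{u'\in\mathcal{U}_{r'}} \mathcal{F}^{u'}_{\,t-d(\mathcal{U}_r,\mathcal{U}_{r'})-1}.
$$
The whole claim reduces to establishing the inclusion $\mathcal{I}^v_t(\{\mathcal{U}'_s\})\subseteq \mathcal{I}^v_t(\{\mathcal{U}_r\})$ for every $v$ and every $t$.

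The key step is a distance-monotonicity observation for refinements. Fix $v$, let $\mathcal{U}'_s\subset\mathcal{U}_r$ be its $\Pi'$- and $\Pi$-clusters, and let $u'$ be an arbitrary agent with $\Pi'$-cluster $\mathcal{U}'_{s'}\subset\mathcal{U}_{r'}$. Since the cluster distance is a minimum over pairs and the refined clusters are subsets of the coarse ones, restricting the minimum to the smaller sets can only increase it:
$$
d(\mathcal{U}'_s,\mathcal{U}'_{s'}) \;=\; \min_{a\in\mathcal{U}'_s,\,b\in\mathcal{U}'_{s'}} d(a,b) \;\ge\; \min_{a\in\mathcal{U}_r,\,b\in\mathcal{U}_{r'}} d(a,b) \;=\; d(\mathcal{U}_r,\mathcal{U}_{r'}).
$$
This holds uniformly and in particular covers the important case $r=r'$, where two refined clusters sit inside a single coarse cluster: there the coarse delay is $d(\mathcal{U}_r,\mathcal{U}_r)=0$ while the refined delay is merely nonnegative, so the inequality still holds. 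Consequently the history index satisfies $t-d(\mathcal{U}'_s,\mathcal{U}'_{s'})-1 \le t-d(\mathcal{U}_r,\mathcal{U}_{r'})-1$, and because each $\mathcal{F}^{u'}_\cdot$ is nondecreasing in its time index, $\mathcal{F}^{u'}_{t-d(\mathcal{U}'_s,\mathcal{U}'_{s'})-1}\subseteq \mathcal{F}^{u'}_{t-d(\mathcal{U}_r,\mathcal{U}_{r'})-1}$ for every $u'$. Taking the union over all $u'$ yields the desired $\mathcal{I}^v_t(\{\mathcal{U}'_s\})\subseteq\mathcal{I}^v_t(\{\mathcal{U}_r\})$.

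From here both conclusions follow without further computation. Any $\pi'\in\Pi'$ chooses each $p^v_t$ as a function of $\mathcal{I}^v_t(\{\mathcal{U}'_s\})$; since this set is contained in $\mathcal{I}^v_t(\{\mathcal{U}_r\})$, the identical policy is also admissible under the coarser partition, so $\pi'\in\Pi$ and hence $\Pi'\subseteq\Pi$. Minimising the fixed map $\pi\mapsto R^v_T(\pi,L)$ over the larger set then gives $\min_{\pi\in\Pi}R^v_T(\pi,L)\le\min_{\pi'\in\Pi'}R^v_T(\pi',L)$ for every $L$ and $v$. I expect the only delicate point to be the bookkeeping in the $r=r'$ case together with the precise statement of ``the policy depends only on'' as measurability with respect to the relevant information set; the distance monotonicity itself is a one-line consequence of minimising over subsets.
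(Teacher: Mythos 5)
Your proposal is correct and follows essentially the same route as the paper's proof: the refinement hypothesis gives $d(\mathcal{U}'_{s},\mathcal{U}'_{h})\geq d(\mathcal{U}_{r_1},\mathcal{U}_{r_2})$ since minimising over subsets can only increase the minimum, hence any $\pi'\in\Pi'$ depends on a subset of the information permitted under the coarser partition, so $\Pi'\subset\Pi$ and the regret inequality follows from minimising over a larger set. Your extra care with the $r=r'$ case and the explicit information-set formalism is a harmless elaboration of the same argument.
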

\begin{proof}
    It suffices to show $\Pi^\prime \subset \Pi$.
    Consider a cluster-based federated algorithm $\pi^\prime \in \Pi^\prime$. For any agent $v\in \mathcal{V}$, let $\mathcal{U}^\prime_s$ be the cluster of $v$ in $\Pi^\prime$. By definition of cluster-based procedure, agent $v$'s action selection distribution probability $p^v_t$ depends on the history information up to round $t-d(\mathcal{U}^\prime_s, \mathcal{U}^\prime_{h})-1$ of all agents $u^\prime\in \mathcal{U}^\prime_{h}$.
    
    By the assumption, there exists two subset $\mathcal{U}_{r_1}, \mathcal{U}_{r_2}\subset \mathcal{V}$ such that $\mathcal{U}^\prime_{s} \subset \mathcal{U}_{r_1}$ and $\mathcal{U}^\prime_{h} \subset \mathcal{U}_{r_2}$. Hence $d(\mathcal{U}_{r_1}, \mathcal{U}_{r_2}) \leq d(\mathcal{U}^\prime_{s}, \mathcal{U}^\prime_{h})$, from which it follows $t-d(\mathcal{U}^\prime_{s}, \mathcal{U}^\prime_{h})-1 \leq t - d(\mathcal{U}_{r_1}, \mathcal{U}_{r_2}) - 1$. Hence $\pi^\prime\in \Pi$ which completes the proof.
\end{proof}


\begin{lemma}\label{lm:delta-g}
There exists a graph $\mathcal{G}=(\mathcal{V}, \mathcal{E})$ with $N$ nodes and a matrix $M\in \mathcal{M}_{\mathcal{G}}$, together with two subsets of nodes $I_0, I_1 \subset \mathcal{V}$ of size $\left|I_0\right|=\left|I_1\right| \geq N/4$ and such that
$$
d\left(I_0, I_1\right) \geq \Tilde{\Delta},
$$
where $d\left(I_0, I_1\right)$ is the shortest-path distance in $\mathcal{G}$ between the two sets and $$\Tilde{\Delta} =  \frac{\sqrt{2}}{3} \sqrt{\frac{1 + d_{\max}} {\lambda_{N-1}(M)}}.$$
\end{lemma}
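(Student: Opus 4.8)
The plan is to prove this existence statement by exhibiting an explicit graph whose Laplacian spectrum is known in closed form, so that the target inequality $d(I_0,I_1)\ge \tilde\Delta$ reduces to a clean numerical comparison. I would take $\mathcal{G}$ to be the path $P_N$ on the $N$ vertices $1,2,\dots,N$ (equivalently one could take the cycle $C_N$), and let $M$ be its graph Laplacian, which is an admissible choice in $\mathcal{M}_{\mathcal{G}}$, so that no optimisation over edge weights is needed. For this graph $d_{\max}=2$, and the Laplacian eigenvalues are $2\bigl(1-\cos(k\pi/N)\bigr)$ for $k=0,1,\dots,N-1$, so the algebraic connectivity is $\lambda_{N-1}(M)=2\bigl(1-\cos(\pi/N)\bigr)$. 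For the two clusters I would take the end segments $I_0=\{1,\dots,k\}$ and $I_1=\{N-k+1,\dots,N\}$ with $k=\lceil N/4\rceil$, so that $|I_0|=|I_1|=\lceil N/4\rceil\ge N/4$ as required, and the remaining vertices form the bottleneck in the middle.

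The two quantities to control are then $d(I_0,I_1)$ and $\tilde\Delta$. The shortest path between the clusters joins vertex $k$ to vertex $N-k+1$, giving $d(I_0,I_1)=N-2\lceil N/4\rceil+1\ge N/2-1$. For $\tilde\Delta$ the only subtlety is that, since a larger $\lambda_{N-1}(M)$ makes $\tilde\Delta$ smaller, I need a \emph{lower} bound on the algebraic connectivity; the right tool is the elementary Jordan-type inequality $1-\cos\theta\ge 2\theta^2/\pi^2$ on $[0,\pi]$, applied at $\theta=\pi/N$, which yields $\lambda_{N-1}(M)\ge 4/N^2$. Substituting and using $d_{\max}=2$ gives $\tilde\Delta=\frac{\sqrt2}{3}\sqrt{(1+d_{\max})/\lambda_{N-1}(M)}\le \frac{\sqrt2}{3}\sqrt{3N^2/4}=\frac{\sqrt6}{6}\,N$. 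Since $\frac{\sqrt6}{6}\approx 0.408<\frac12$, comparing with $d(I_0,I_1)\ge N/2-1$ closes the argument once $N$ is large enough (and in the lower-bound construction we are free to take $N$ as large as we wish); in fact, using the exact value of $\lambda_{N-1}(M)$ rather than the Jordan bound shows the inequality holds with substantial slack even for moderate $N$.

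The main thing to get right is the direction of the spectral estimate together with the numerical constant: the whole point of the lemma is that the ``spectral distance'' $\sqrt{(1+d_{\max})/\lambda_{N-1}(M)}$ is genuinely realised as a geometric distance, which forces us to lower-bound $\lambda_{N-1}(M)$ rather than upper-bound it, and to verify that the constant $\sqrt2/3$ leaves enough room against the achievable gap of roughly $N/2$. The choice of $\mathcal{G}$ is also constrained: a path or cycle is essentially forced because it keeps $d_{\max}$ bounded while $\lambda_{N-1}(M)\asymp N^{-2}$ scales with the square of the diameter, so that $\sqrt{1/\lambda_{N-1}(M)}\asymp N$ matches the linear cluster distance; denser bottleneck graphs such as a barbell would make $\sqrt{(1+d_{\max})/\lambda_{N-1}(M)}$ grow faster than the diameter and break the inequality. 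Handling the ceilings in the cluster sizes and distance is then the only remaining routine bookkeeping.
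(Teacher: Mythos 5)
Your proof is correct, but it takes a genuinely different route from the paper. The paper does not construct a graph at all: it invokes Lemma~24 of \cite{scaman2019optimal} as a black box, which supplies a graph and clusters with $d(I_0,I_1)\geq \frac{\sqrt{2}}{3}\sqrt{\lambda_1(M)/\lambda_{N-1}(M)}$, and then reduces that bound to the stated one by proving $\lambda_1(M)\geq 1+d_{\max}$ via a Rayleigh-quotient argument with an explicit test vector supported on a maximum-degree vertex and its neighbours. You instead make everything explicit on the path $P_N$, where $d_{\max}=2$, $\lambda_{N-1}(M)=2\left(1-\cos(\pi/N)\right)$ is known in closed form, and the end segments of size $\lceil N/4\rceil$ realise a distance of about $N/2$; the Jordan-type bound $1-\cos\theta\geq 2\theta^2/\pi^2$ then gives $\lambda_{N-1}(M)\geq 4/N^2$ and hence $\tilde{\Delta}\leq \frac{\sqrt{6}}{6}N<N/2-1$. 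Your argument is self-contained and elementary, and your structural remark (bounded degree plus $\lambda_{N-1}\asymp N^{-2}$ is what makes the spectral quantity match the geometric distance) is exactly the content hidden inside the cited lemma; the paper's route is shorter and yields the bound for whatever family of graphs Scaman et al.\ construct, at the cost of importing their result. Two small points to tidy up: the lemma is stated for a given $N$, so you should either note that the target inequality holds for all $N\geq 2$ when you use the exact value $\lambda_{N-1}(M)=4\sin^2\left(\pi/(2N)\right)$ (it does, with room to spare), or restrict to $N\geq 11$ where your Jordan-bound computation already closes; and since $\tilde{\Delta}$ depends on $\lambda_{N-1}(M)$ of the very graph you choose, it is worth stating explicitly, as you do, that the estimate needed is a \emph{lower} bound on the algebraic connectivity.
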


\begin{proof}
    From Lemma~24 in \cite{scaman2019optimal}, there exists exists a graph $\mathcal{G}=(\mathcal{V}, \mathcal{E})$ with $N$ nodes and a matrix $M\in \mathcal{M}_{\mathcal{G}}$, together with two subsets of nodes $I_0, I_1 \subset \mathcal{V}$ of size $\left|I_0\right|=\left|I_1\right| \geq N/4$ and such that
$$
d\left(I_0, I_1\right) \geq \frac{\sqrt{2}}{3} \sqrt{\frac{\lambda_1(M)} {\lambda_{N-1}(M)}}.
$$
We show that $\lambda_1(M) \geq 1 + d_{\max}$.
To see this, note that $\lambda_1(M)$ is the Rayleigh quotient $\max _{x \neq 0} \frac{x^T M x}{x^T x}$.
By the definition of the Laplacian matrix,
$$
x^T M x=\sum_{(v, u) \in \mathcal{E}}\left(x_v-x_u\right)^2
$$
Let $v$ be the vertice whose degree is $d_{\max}$ and let
$$
x_u:= \begin{cases} \sqrt{\frac{d_{\max}}{1+d_{\max}}} & \text { if } u=v \\ - \frac{1}{1+d_{\max}} \sqrt{1+\frac{1}{d_{\max}}} & \text { if } u \neq v \text { and } v_i \text { is adjacent to } v_j \\ 0 & \text { otherwise }\end{cases}
$$
then
$$
\sum_{(v, u) \in \mathcal{E}}\left(x_v-x_u\right)^2
= d_{\max} \left( \sqrt{\frac{d_{\max}}{1+d_{\max}}} +  \frac{1}{1+d_{\max}} \sqrt{1+\frac{1}{d_{\max}}}\right)^2
= 1 + d_{\max}.
$$
Hence, $\lambda_1(M) \geq 1+ d_{\max}$
\end{proof}

Let $I_0, I_1$ be two subsets of nodes satisfying
$$d(I_0, I_1)\geq \Tilde{\Delta}\quad \text{and} \quad |I_0| = |I_1| = N/4.$$
The number of rounds needed to communicate between any node in $I_0$ and any node $I_1$ is at least $\Tilde{\Delta}$.
\begin{lemma} \label{lm:delay}
    Let  $v_0\in I_0$ and $v_1\in \mathcal{V}\backslash I_0$. Consider a cluster-based federated algorithm with clusters
    $I_0$ and $V\backslash I_0$. Then, any distributed online learning algorithm $\sigma$ for full information feedback setting has an expected regret 
    $$
    R^{v_1}_T \geq \frac{1-o(1)}{4} \sqrt{\frac{\left(\Tilde{\Delta}+1\right)}{2}T\log K}
    $$
    as $T\to\infty$.
\end{lemma}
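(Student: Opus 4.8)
The plan is to localize a hard single-agent instance on the cluster $I_0$ and exploit that the far-away agent $v_1$ can only react to it after a delay of at least $\tilde{\Delta}+1$ rounds, thereby reducing the problem to prediction with expert advice under delayed full-information feedback. Concretely, I would choose the loss tensor so that every agent outside $I_0$ always suffers loss $0$, while all agents $u\in I_0$ share a common coordinated loss vector $\ell^u_t=c_t$ with $c_t(i)\in\{0,1\}$. Because only the $|I_0|=N/4$ agents in $I_0$ carry loss, the global average defining the regret collapses to $\bar\ell_t(i)=\frac{1}{N}\sum_{u\in I_0}c_t(i)=\frac14 c_t(i)$, so that $R^{v_1}_T$ is exactly $\frac14$ times the single-sequence regret of $v_1$ against $c$. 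It then remains to lower bound this single-sequence regret.

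For the delay, I use that $v_1$'s cluster is at distance at least $\tilde{\Delta}$ from $I_0$ (taking $v_1\in I_1$, so $d(v_1,I_0)\geq d(I_0,I_1)\geq\tilde{\Delta}$ by Lemma~\ref{lm:delta-g}). By the cluster-based definition the action distribution $p^{v_1}_t$ may then depend on the losses incurred inside $I_0$ only up to round $t-\tilde{\Delta}-1$, while the histories of the loss-free clusters are all-zero and carry no information about $c$. Hence $v_1$ is exactly an online learner facing $c$ with feedback delay $\tilde{\Delta}+1$. I would partition the horizon into $m=\lfloor T/(\tilde{\Delta}+1)\rfloor$ consecutive blocks of length $\tilde{\Delta}+1$ and keep $c_t$ constant on each block, equal to a fresh random vector $b_k$ with i.i.d.\ $\mathrm{Bernoulli}(1/2)$ coordinates. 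The crucial point is that, because of the delay, at every round $s$ in block $k$ the learner has seen feedback only through the end of block $k-1$ (since $s-(\tilde{\Delta}+1)\leq(k-1)(\tilde{\Delta}+1)$), so the averaged play $\bar p_k$ in block $k$ is independent of $b_k$; this makes $(\bar p_k,b_k)_{k\leq m}$ an ordinary undelayed $m$-round, $K$-expert game whose per-round loss is replayed $\tilde{\Delta}+1$ times.

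With this reduction the regret against $c$ is at least $(\tilde{\Delta}+1)$ times the minimax regret of the $m$-round expert game. For the latter I would invoke the classical anti-concentration bound: since $\bar p_k$ is independent of $b_k$ the learner's expected cumulative loss is $m/2$, while $\mathbb{E}[\min_i\sum_{k=1}^m b_k(i)]\leq m/2-(1-o(1))\sqrt{(m/2)\log K}$ by a central-limit estimate for the minimum of $K$ independent binomials. This yields an $m$-round regret of $(1-o(1))\sqrt{(m/2)\log K}$, so the regret against $c$ is $(\tilde{\Delta}+1)(1-o(1))\sqrt{(m/2)\log K}=(1-o(1))\sqrt{\tfrac{(\tilde{\Delta}+1)}{2}T\log K}$ after substituting $m(\tilde{\Delta}+1)\approx T$; multiplying by the factor $\frac14$ from the global average gives the claimed bound.

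The main obstacle is the delayed-feedback reduction rather than the counting: I must verify rigorously that the delay makes $\bar p_k$ measurable with respect to $b_1,\dots,b_{k-1}$ and the learner's internal randomness only, using the precise round-index bookkeeping above, and that the comparator $\min_i\sum_t\bar\ell_t(i)$ indeed factors through $\min_i\sum_k b_k(i)$. The second delicate point is tracking constants in the classical lower bound so that the leading factor is exactly $\sqrt{1/2}$ with a $(1-o(1))$ that vanishes as $T\to\infty$; this requires the block count $m$ to grow, which holds since $\tilde{\Delta}$ is fixed by the graph while $T\to\infty$.
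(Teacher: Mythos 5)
Your proposal matches the paper's proof essentially step for step: zero out all losses outside $I_0$ so that $\bar\ell_t=\frac14 c_t$ (using $|I_0|\geq N/4$), use the $\tilde{\Delta}$-round communication delay to reduce agent $v_1$'s problem to an undelayed $B$-round experts game played in blocks of length $\tilde{\Delta}+1$, and invoke the classical $(1-o(1))\sqrt{(B/2)\ln K}$ lower bound. The only cosmetic difference is that the paper cites this last bound as a black box (Theorem~3.7 of \cite{cesa2006prediction}) whereas you unfold the i.i.d.\ Bernoulli$(1/2)$ construction that underlies it; the reduction and bookkeeping are otherwise the same.
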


\begin{proof}
Consider an online learning with expert advice problem with the action set $\mathcal{A}$ over $B$ rounds \citep{cesa1997use}.
Let $\ell_1^\prime, \dots, \ell_B^\prime$ be an arbitrary sequence of losses and $p^\prime_b$ be the action selection distribution at round $b$.
We show that $\sigma$ can be used to design an algorithm for this online learning with expert advice problem, adapted from \cite{cesa2016delay}.

Consider the loss sequences $\{\ell^v_t\}_{t=1}^T$ for each $v\in \mathcal{V}$ with $T = (\Tilde{\Delta}+1)B$ such that
$$
\ell^v_t =
\begin{cases}
    \ell^\prime_{\lceil t/(\Tilde{\Delta}+1) \rceil}, & v\in I_0 \\
    0 & \text{otherwise.} \\
\end{cases}
$$
Let $p^v_t$ be the action select distribution of agent $v\in \mathcal{V}$ running the algorithm $\sigma$.
Define the algorithm for the online learning with expert advice problem as follows:
$$
p^\prime_b =\frac{1}{\Tilde{\Delta}+1} \sum_{s=1}^{\Tilde{\Delta}+1} p^{v_1}_{(\Tilde{\Delta}+1)(b-1)+s}
$$
where $p^v_t = (1/k,\dots,1/k)$ for all $t\leq 1$ and $v\in \mathcal{V}$.

Note that $p^\prime_b$ is defined by $p^{v_1}_{(\Tilde{\Delta}+1)(b-1)+1}, \dots, p^{v_1}_{(\Tilde{\Delta}+1)b}$.
These are in turn defined by $\ell_1^{v_0},\dots, \ell_{(\Tilde{\Delta}+1)(b-1)}^{v_0}$ by the definition of cluster-based communication protocol. Also note that $\lceil t/(\Tilde{\Delta}+1) \rceil \leq b - 1$ for $t\leq (\Tilde{\Delta}+1)b$, hence $p^\prime_b$ is determined by $\ell_1^\prime, \dots, \ell_{b-1}^\prime$.
Therefore $p^\prime_1,\dots, p^\prime_B$ are generated by a legitimate algorithm for online learning with expert advice problem.

Note that the cumulative regret of agent $v_1$ is 
\begin{eqnarray}
    \label{eq:lower-1}
    \sum_{t=1}^T \langle p^{v_1}_t, \bar{\ell}_t \rangle 
    & = & \frac{1}{N} \sum_{t=1}^T \left[ \sum_{v\in I_0} \langle p^{v_1}_t, \ell^v_t \rangle + \sum_{v\not\in I_0} \langle p^{v_1}_t, \ell^v_t \rangle\right]  \nonumber \\ 
    & = & \frac{1}{4} \sum_{t=1}^T \langle p^{v_1}_t, \ell^\prime_{\lceil t/(\Tilde{\Delta}+1) \rceil} \rangle \nonumber \\
    & = & \frac{1}{4} \sum_{b=1}^B \sum_{s=1}^{\Tilde{\Delta}+1} \langle p^{v_1}_{(\Tilde{\Delta}+1)(b-1)+s}, \ell^\prime_{b} \rangle \nonumber \\
    & = & \frac{\Tilde{\Delta}+1}{4} \sum_{b=1}^B  \langle p^{\prime}_{b}, \ell^\prime_{b} \rangle
\end{eqnarray}
the second equality comes from the definition of $\ell^v_t$ and the fourth equality comes from the definition of $p^{\prime}_b$.

Also note that 
\begin{eqnarray}
\label{eq:lower-2}
\min_{i\in \mathcal{A}} \sum_{t=1}^T \bar{\ell}_t(i)
&=& \frac{1}{4} \min_{i\in \mathcal{A}} \sum_{t=1}^T \ell^\prime_{\lceil t/(\Tilde{\Delta}+1) \rceil} (i) \nonumber \\
&=& \frac{\Tilde{\Delta}+1}{4} \min_{i\in \mathcal{A}} \sum_{b=1}^B \ell^\prime_{b} (i)
\end{eqnarray}
From (\ref{eq:lower-1}) and (\ref{eq:lower-2}), it follows that
$$
\sum_{t=1}^T \langle p^{v_1}_t, \bar{\ell}_t \rangle  - \min_{i\in \mathcal{A}} \sum_{t=1}^T \bar{\ell}_t(i)
= \frac{\Tilde{\Delta}+1}{4}
\left[ \sum_{b=1}^B  \langle p^{\prime}_{b}, \ell^\prime_{b} \rangle - \min_{i\in \mathcal{A}} \sum_{b=1}^B \ell^\prime_{b} (i)\right].
$$
There exists a sequence of losses $\ell_1^\prime, \dots, \ell_B^\prime$ such that for any algorithm for online learning with expert advice problem, the expected regret satisfies \citep[Theorem~3.7]{cesa2006prediction}, 
$$
\sum_{b=1}^B  \langle p^{\prime}_{b}, \ell^\prime_{b} \rangle - \min_{i\in \mathcal{A}} \sum_{b=1}^B \ell^\prime_{b} (i)
\geq
(1-o(1))\sqrt{\frac{B}{2} \ln K}
$$
Hence, we have 
$$
\sum_{t=1}^T \langle p^{v_1}_t, \bar{\ell}_t \rangle  - \min_{i\in \mathcal{A}} \sum_{t=1}^T \bar{\ell}_t(i)
\geq
\frac{1-o(1)}{4} \sqrt{(\Tilde{\Delta}+1)\frac{T}{2}  \ln K}.
$$
\end{proof}

\subsection{Proof of Theorem~\ref{thm:lower-bound-bandit}} \label{app:lower-bandit}

The lower bound contains two parts. The first part is derived by using information-theoretic arguments in \cite{shamir2014fundamental} and it  captures the effect of bandit feedback. The second part is inherited from the full-information feedback lower bound in Theorem~\ref{thm:lower-bound-full} by the fact that the regret of an agent in the bandit feedback setting cannot be smaller than the regret in the full-information setting.

Consider a centralized federated algorithm with all the agents in the same cluster $\mathcal{V}$, denoted as $\Pi^C$.
Note that by Lemma~\ref{lm:monotone}, for a federated bandit algorithm $\Pi^G$,
$$ \Pi^G \subset \Pi^C \quad \text{and} \quad \min_{\pi^\prime\in\Pi^C} R^v_T(\pi^\prime, L ) \leq  \min_{\pi\in\Pi^G} R^v_T(\pi, L)$$
for any $L\in [0, 1]^{T\times N\times K}$ and any $v\in \mathcal{V}$.


For any $\pi^\prime\in \Pi^C$, at each round $t$, every agent $v\in \mathcal{V}$ receives $O(|\mathcal{N}(v)|)$ bits since its neighboring agents can choose at most $|\mathcal{N}(v)|$ distinct actions. By Theorem~4 in \cite{shamir2014fundamental}, there exists some distribution $\mathcal{D}$ over $[0,1]^K$ such that loss vectors $\Bar{\ell}_t \overset{i.i.d}{\sim} \mathcal{D}$ for all $t=1,2,\dots, T$  and $\min_{i\in \mathcal{A}} \mathbb{E}\left[ \sum_{t=1}^T \Bar{\ell}_t(a_t(v)) - \sum_{t=1}^T\Bar{\ell}_t(i)\right] = \Omega\left(\min\{T, \sqrt{KT/(1+|\mathcal{N}(v)|)}\}\right)$.

Hence, it follows that
\begin{eqnarray}
    \min_{L} R^v_T(\pi, L)
    &\geq& \min_{L} R^v_T(\pi^\prime, L ) \nonumber \\ 
    & \geq &
    \mathbb{E}_{\Bar{\ell}_t\sim \mathcal{D}}\left[\sum_{t=1}^T\Bar{\ell}_t(a_t(v)) - \min_{i\in \mathcal{A}} \sum_{t=1}^T \Bar{\ell}_t(i)\right] \nonumber \\
    &\geq& 
    \max_{i\in \mathcal{A}} \mathbb{E}_{\Bar{\ell}_t\sim \mathcal{D}}\left[\sum_{t=1}^T\Bar{\ell}_t(a_t(v)) -  \sum_{t=1}^T \Bar{\ell}_t(i)\right] \nonumber \\
    &\geq& \min_{i\in \mathcal{A}}
    \mathbb{E}_{\Bar{\ell}_t\sim \mathcal{D}}\left[\sum_{t=1}^T\Bar{\ell}_t(a_t(v)) - \sum_{t=1}^T \Bar{\ell}_t(i)\right] \nonumber \\ 
    &=& \Omega\left(\min\{T, \sqrt{KT/(1+|\mathcal{N}(v)|)}\}\right) \nonumber 
\end{eqnarray}
where the third inequality comes from Jensen's inequality.

Also note that any federated bandit algorithm for bandit feedback setting is also a federated bandit algorithm for full-information setting, from which it follows 
\begin{eqnarray}
    \min_{L} R^v_T(\pi, L) 
    &\geq&   
     \max\left\{ \Omega\left(\min\{T, \sqrt{KT/(1+|\mathcal{N}(v)|)}\}\right), \Omega \left(\sqrt[4]{\frac{1+d_{\max}}{\lambda_{N-1}(M)} } \sqrt{ T\log K} \right) \right\} \nonumber \\
    &=&
    \Omega\left( \min\left\{T, \max\left\{\sqrt{K/(1+|\mathcal{N}(v)|)}, \sqrt[4]{\frac{1+d_{\max}}{\lambda_{N-1}(M)} } \sqrt{\log K} \right\}\sqrt{T}\right\} \right). \nonumber 
\end{eqnarray}

\subsection{Auxiliary lemmas}
Here we present some auxiliary lemmas which are used in the proof of Theorem~\ref{thm:upper-bound:static}.
Recall that $\hat{\ell}_t$ and $\bar{z}_t$ are the average instant loss estimator and average cumulative loss, 
\begin{equation*}
    f_t = \frac{1}{N}\sum_{v\in \mathcal{V}} g^v_t \quad \text{and} \quad \bar{z}_t = \frac{1}{N}\sum_{v\in \mathcal{V}} z^v_t
\end{equation*}
and $y_t$ is action distribution to minimize the regularized average cumulative loss
$$
y_t(i) = \frac{\exp\{-\eta_{t-1} \bar{z}_t(i)\}}{\sum_{j\in A} \exp\{-\eta_{t-1} \bar{z}_t(j)\}}
$$
\begin{lemma}
\label{lm:z-bar}
For each time step $t=1,\dots,T$,
\begin{equation*}
    \bar{z}_{t+1} = \bar{z}_t +  f_t
\end{equation*}
and
$$\max\{ \|g_t^v\|_{\ast}, \|f_t\|_{\ast}\} \leq \frac{K}{\gamma_t}
$$
\end{lemma}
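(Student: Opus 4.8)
The plan is to establish the two claims separately; both follow directly from the update rules in Algorithm~\ref{algo:FedExp3} together with the double stochasticity of the gossip matrix $W$. Throughout I read the gossip step as $z_{t+1}^v = \sum_{u\in\mathcal{V}} W_{u,v}\, z_t^u + g_t^v$, where the diagonal self-weight $W_{v,v}$ is included and $W_{u,v}=0$ whenever $u\neq v$ and $(u,v)\notin\mathcal{E}$, and I take $\|\cdot\|_\ast$ to be the $\ell_\infty$-norm, the dual of the $\ell_1$ geometry in which the action distributions live on the simplex.

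For the identity $\bar z_{t+1} = \bar z_t + f_t$, I would simply average the gossip update over all agents:
\begin{equation*}
\bar z_{t+1} = \frac{1}{N}\sum_{v\in\mathcal{V}} z_{t+1}^v = \frac{1}{N}\sum_{v\in\mathcal{V}}\sum_{u\in\mathcal{V}} W_{u,v}\, z_t^u + \frac{1}{N}\sum_{v\in\mathcal{V}} g_t^v .
\end{equation*}
Swapping the order of summation in the first term and invoking the column-sum property $\sum_{v\in\mathcal{V}} W_{u,v}=1$ of the doubly stochastic matrix collapses it to $\frac1N\sum_{u\in\mathcal{V}} z_t^u = \bar z_t$, while the second term is $f_t$ by definition. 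This is the only place where double stochasticity enters, and it is exactly what makes the \emph{average} cumulative loss estimator evolve as a clean running sum of the $f_t$'s, as if a single centralized learner were accumulating the averaged loss estimates — the fact that later lets us compare against the idealized distribution $y_t$.

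For the norm bound I would first handle a single $g_t^v$. Since $g_t^v(i)=\ell_t^v(i)\,\mathbb{I}\{a_t^v=i\}/p_t^v(i)$ is supported on the single coordinate $a_t^v$ and $\ell_t^v\in[0,1]$, its only nonzero entry has magnitude $\ell_t^v(a_t^v)/p_t^v(a_t^v)\le 1/p_t^v(a_t^v)$, so $\|g_t^v\|_\ast$ equals this value for any $\ell_p$ dual norm (single-coordinate support makes the per-agent bound norm-independent). The explicit uniform exploration in $p_t^v(i)=(1-\gamma_t)x_t^v(i)+\gamma_t/K$ guarantees the floor $p_t^v(a_t^v)\ge \gamma_t/K$, hence $\|g_t^v\|_\ast \le K/\gamma_t$. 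The bound for $f_t$ then follows from the triangle inequality applied to the average, $\|f_t\|_\ast \le \frac1N\sum_{v\in\mathcal{V}}\|g_t^v\|_\ast \le K/\gamma_t$.

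I do not anticipate a genuine obstacle: the lemma is an elementary consequence of the update rules and the structure of the importance-weighted estimator. The only points requiring care are notational — making explicit that the gossip sum carries the diagonal weight $W_{v,v}$ so that the column sums genuinely equal one, and fixing $\|\cdot\|_\ast$ as the $\ell_\infty$-norm so that the single-coordinate support of $g_t^v$ together with the floor $p_t^v(a_t^v)\ge\gamma_t/K$ combine to yield the stated $K/\gamma_t$ bound, which is precisely the quantity that later controls the $\operatorname{CONSENSUS}$ term in the regret decomposition.
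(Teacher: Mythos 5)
Your proof is correct and follows essentially the same route as the paper's: averaging the gossip update and invoking double stochasticity of $W$ to collapse the first sum to $\bar z_t$, then bounding $\|g_t^v\|_\ast$ via the exploration floor $p_t^v(i)\ge\gamma_t/K$ and passing to $f_t$ by the triangle inequality. Your added remarks on the diagonal weight and the choice of dual norm are sensible clarifications but do not change the argument.
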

\begin{proof}
\begin{eqnarray}
\bar{z}_{t+1} 
&=& \frac{1}{N}\sum_{v\in \mathcal{V}} z^v_{t+1}\nonumber \\
&=& \frac{1}{N}\sum_{v\in \mathcal{V}} \sum_{u: (u,v)\in \mathcal{E}} W_{u,v} z_t^u + \frac{1}{N}\sum_{v\in \mathcal{V}}g_t^v\nonumber \\
&=& \frac{1}{N}\sum_{v\in \mathcal{V}} z^v_{t} + \frac{1}{N}\sum_{v\in \mathcal{V}}g_t^v\nonumber \\
&=& \bar{z}_t + f_t \nonumber
\end{eqnarray}
where the second equality comes from Line~7 in Algorithm \ref{algo:FedExp3} and the third equality comes from the double-stochasticity of $W$.

Noting that $p^v_t(i) \geq \gamma / K$ for all $v\in \mathcal{V}$, $i\in \mathcal{A}$ and $t\in \{1,\ldots, T\}$, it follows that
$$
\|g^v_t\|_\ast = \frac{\ell^v_t(a^v_t)}{p^v_t(a^v_t)} \leq \frac{K}{\gamma_t}
\quad
\text{and}
\quad
\|f_t\|_\ast \leq \frac{1}{N}\sum_{v\in \mathcal{V}}  \| g^v_t\|_\ast \leq \frac{K}{\gamma_t}.
$$
\end{proof}

\begin{lemma}\label{lm:loss-estimator}
For any $v\in \mathcal{V}$ and $t\geq 1$, it holds that
$$
\mathbb{E}\left[g_t^v \mid \mathcal{F}_{t-1}\right] = \ell^v_t \text { and } \mathbb{E}\left[f_t \mid \mathcal{F}_{t-1}\right] = \Bar{\ell}_t.
$$
with
$$
\mathbb{E}\left[\|f_t\|_\ast \right] \leq K \text { and } \mathbb{E}\left[\|f_t\|_\ast^2 \right] \leq \frac{K^2}{\gamma_t}
$$
\end{lemma}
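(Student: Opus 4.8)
The plan is to verify each of the four claims in turn, exploiting the explicit form of the importance-weighted estimator $g^v_t(i) = \ell^v_t(i)\mathbb{I}\{a^v_t = i\}/p^v_t(i)$ together with the tower property and the already-established identities from Lemma~\ref{lm:z-bar}. The key observation throughout is that, conditioned on $\mathcal{F}_{t-1}$, the sampling distribution $p^v_t$ is determined, since by Definition~\ref{def:federated} the action-selection distribution depends only on past feedback; hence $a^v_t\sim p^v_t$ is the only fresh randomness at round $t$ for agent $v$.

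First I would establish unbiasedness of $g^v_t$. Fixing a coordinate $i\in\mathcal{A}$ and conditioning on $\mathcal{F}_{t-1}$, the loss $\ell^v_t(i)$ is a deterministic (adversary-chosen) constant and $p^v_t(i)$ is $\mathcal{F}_{t-1}$-measurable, so
\begin{equation*}
\mathbb{E}\left[g^v_t(i)\mid \mathcal{F}_{t-1}\right]
= \frac{\ell^v_t(i)}{p^v_t(i)}\,\mathbb{E}\left[\mathbb{I}\{a^v_t=i\}\mid \mathcal{F}_{t-1}\right]
= \frac{\ell^v_t(i)}{p^v_t(i)}\,p^v_t(i)
= \ell^v_t(i).
\end{equation*}
This gives $\mathbb{E}[g^v_t\mid \mathcal{F}_{t-1}] = \ell^v_t$ coordinatewise. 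Averaging over $v\in\mathcal{V}$ and using $f_t = \frac{1}{N}\sum_v g^v_t$ together with the definition $\bar{\ell}_t(i)=\frac{1}{N}\sum_v \ell^v_t(i)$ yields $\mathbb{E}[f_t\mid \mathcal{F}_{t-1}]=\bar{\ell}_t$ by linearity of conditional expectation.

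Next I would bound the first moment of the dual norm. Since $g^v_t$ has a single nonzero coordinate (at $a^v_t$), we have $\|g^v_t\|_\ast = \ell^v_t(a^v_t)/p^v_t(a^v_t)$, and because only one arm is selected, $\sum_{i}\mathbb{I}\{a^v_t=i\}\ell^v_t(i)/p^v_t(i)$ has conditional expectation $\sum_i \ell^v_t(i)\le K$ using $\ell^v_t(i)\in[0,1]$; thus $\mathbb{E}[\|g^v_t\|_\ast\mid\mathcal{F}_{t-1}]\le K$, and averaging gives $\mathbb{E}[\|f_t\|_\ast]\le K$ after taking total expectation. For the second moment, I would use the deterministic bound $\|f_t\|_\ast\le K/\gamma_t$ from Lemma~\ref{lm:z-bar} to write $\mathbb{E}[\|f_t\|_\ast^2]\le \frac{K}{\gamma_t}\,\mathbb{E}[\|f_t\|_\ast]\le \frac{K^2}{\gamma_t}$, combining the crude worst-case bound with the sharper first-moment bound.

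The step likely to require the most care is the second-moment bound: one must decide whether to bound $\mathbb{E}[\|f_t\|_\ast^2]$ directly (which would involve cross terms $\langle g^u_t, g^v_t\rangle$ across agents and the $1/N^2$ normalization) or, as sketched above, to sidestep the cross terms by pairing the almost-sure bound $\|f_t\|_\ast\le K/\gamma_t$ with $\mathbb{E}[\|f_t\|_\ast]\le K$. The latter route is cleaner and avoids analyzing correlations between the agents' independent samples, so that is the approach I would pursue; the main subtlety is simply to confirm that the almost-sure bound from Lemma~\ref{lm:z-bar} and the in-expectation bound can be multiplied legitimately, which holds since $\|f_t\|_\ast\ge 0$.
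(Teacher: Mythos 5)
Your proof is correct. The unbiasedness computation and the first-moment bound $\mathbb{E}[\|f_t\|_\ast]\le K$ follow the paper's argument essentially verbatim: condition on $\mathcal{F}_{t-1}$, use that $p^v_t$ is $\mathcal{F}_{t-1}$-measurable, and average over agents. The only place you genuinely diverge is the second-moment bound. The paper computes the conditional second moment of each agent's estimator directly,
$\mathbb{E}\bigl[\|g_t^v\|_\ast^2 \mid \mathcal{F}_{t-1}\bigr] = \sum_{i\in\mathcal{A}} \ell_t^v(i)^2/p_t^v(i) \le \sum_{i\in\mathcal{A}} 1/p_t^v(i) \le K^2/\gamma_t$,
and then passes to $f_t$ via Jensen's inequality applied to the convex map $x\mapsto\|x\|_\ast^2$ (which, as you anticipated, sidesteps any explicit treatment of cross terms). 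You instead interpolate: $\|f_t\|_\ast^2 \le (K/\gamma_t)\,\|f_t\|_\ast$ almost surely by Lemma~\ref{lm:z-bar}, then take expectations and invoke the first-moment bound. Both give exactly $K^2/\gamma_t$, and your multiplication of the almost-sure bound with the in-expectation bound is legitimate since $\|f_t\|_\ast\ge 0$. The paper's route has the minor advantage of producing the intermediate quantity $\mathbb{E}[\sum_i \ell_t^v(i)^2/p_t^v(i)]$, which is the local-norm term that sharper Exp3-type analyses exploit; your route is shorter and reuses bounds already in hand. For the purposes of Theorem~\ref{thm:upper-bound:static}, where only the crude $K^2/\gamma_t$ bound is used, the two are interchangeable.
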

\begin{proof}
Note that $p_t^v$ is determined by $\mathcal{F}_{t-1}$, hence
$$
\mathbb{E}\left[g_t^v(i) \mid \mathcal{F}_{t-1}\right] = \frac{\ell^v_t(i) }{p^v_t(i) }
\mathbb{E}\left[\mathbb{I}\left\{a^v_t = i \right\} \mid \mathcal{F}_{t-1}\right]
=
\frac{\ell^v_t(i) }{p^v_t(i) } p^v_t(i)
= \ell^v_t(i) 
$$
and
$$
\mathbb{E}\left[\|g_t^v\|_\ast \right]
= \mathbb{E}\left[\frac{\ell_t^v(a_t^v)}{p_t^v(a^v_t)} \right]
= \mathbb{E}\left[ \mathbb{E}\left[\frac{\ell_t^v(a_t^v)}{p_t^v(a^v_t)}  \mid \mathcal{F}_{t-1} \right] \right]
= \mathbb{E}\left[ \sum_{i\in \mathcal{A}} p^v_t(i) \frac{\ell_t^v(i)}{p_t^v(i)} \right]
= \sum_{i\in \mathcal{A}} \ell_t^v(i) \leq K
$$
where the last inequality comes from $\ell_t^v(i) \leq 1$.
Since $f_t (i) = \frac{1}{N} \sum_{v\in \mathcal{V}} g^v_t$,
it follows that
$$
\mathbb{E}\left[f_t (i) \mid \mathcal{F}_{t-1}\right]
= \frac{1}{N}\sum_{v\in \mathcal{V}} \ell^v_t(i) = \Bar{\ell}_t(i).
$$
and
$$
\mathbb{E}\left[\|f_t\|_\ast \right] \leq \frac{1}{N}\sum_{v\in \mathcal{V}} \mathbb{E}\left[\|g_t^v\|_\ast \right] \leq K
$$
which comes from Jensen's inequality.
Notice that
$$
\mathbb{E}\left[\|g_t^v\|_\ast^2 \right]
= \mathbb{E}\left[\frac{\ell_t^v(a_t^v)^2}{p_t^v(a^v_t)^2} \right]
= \mathbb{E}\left[ \mathbb{E}\left[\frac{\ell_t^v(a_t^v)^2}{p_t^v(a^v_t)^2}  \mid \mathcal{F}_{t-1} \right] \right]
= \mathbb{E}\left[ \sum_{i\in \mathcal{A}} p^v_t(i) \frac{\ell_t^v(i)^2}{p_t^v(i)^2} \right]
= \mathbb{E}\left[ \sum_{i\in \mathcal{A}} \frac{1}{p_t^v(i)} \right] \leq \frac{K^2}{\gamma_t}
$$
the last inequality comes from $p_t^v(i) \geq \gamma_t / K$.
Again, from Jensen's inequality, it follows
$$
\mathbb{E}\left[\|f_t\|_\ast^2 \right] \leq \frac{1}{N}\sum_{v\in \mathcal{V}} \mathbb{E}\left[\|g_t^v\|_\ast^2 \right] \leq \frac{K^2}{\gamma_t}.
$$
\end{proof}

Before presenting the next lemma, we recall the definition of strongly-convex functions and Fenchel duality. A function $\phi$ is said to be $\alpha$-strongly convex function on a convex set $\mathcal{X}$ if 
$$
\phi(x^\prime) \geq \phi(x) + \langle \nabla\phi(x), x^\prime- x \rangle + \frac{1}{2} \alpha\|x^\prime- x\|^2
$$
for all $x^\prime, x\in \mathcal{X}$, for some $\alpha \geq 0$.

Let $\phi^\ast$ denote the \emph{Fenchel conjugate} of $\phi$, i.e.,
$$
\phi^{\ast}(y)=\max_{x \in \mathcal{X}}\{\langle x, y\rangle-\phi(x)\}
$$
with the projection, 
$$
\nabla \phi^{\ast}(y)=\arg\max_{x \in \mathcal{X}} \{\langle x, y\rangle-\phi(x)\}.
$$

\begin{lemma}
\label{lm:lipschitz}
Let $\psi$ the normalized negative entropy function \cite{lattimore_szepesvari_2020} on $\mathcal{P}_{K-1} = \{x\in [0,1]^K: \sum_{i=1}^K x(i) = 1\}$ 
,
$$
\psi_\eta (x) = \frac{1}{\eta} \sum_{i=1}^k x(i) \left(\log(x(i)) - 1\right).
$$
For all $t=1,\dots, T$,
it holds that
$$
x^{v}_{t} = \underset{x\in \mathcal{P}_{K-1}}{\operatorname{argmin}}\{\langle x, z^{v}_t \rangle + \psi_{\eta_{t}}(x)\} = \nabla \psi_{\eta_{t-1}}^\ast(-z^{v}_{t})
$$
with $\mathcal{X} = \mathcal{P}_{K-1}$ and 
$$
y_t = \underset{x\in \mathcal{P}_{K-1}}{\operatorname{argmin}}\{\langle x, \bar{z}_t \rangle + \psi_{\eta_{t-1}}(x)\}  = \nabla \psi_{\eta_{t-1}}^\ast(-\bar{z}_t)
$$
Furthermore, it holds
$$
\|x^{v}_t - y_t\| \leq \eta_{t-1} \|z^{v}_t - \bar{z}_t\|_\ast.
$$
\end{lemma}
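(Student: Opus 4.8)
The plan is to prove the lemma in two stages: first identify the softmax update of \fedexp\ with the gradient of the Fenchel conjugate of the entropic regularizer, and then derive the Lipschitz estimate from the duality between strong convexity of $\psi_{\eta_{t-1}}$ and smoothness of its conjugate. Throughout I take the primal norm on the left to be $\|\cdot\|_1$ and the dual norm $\|\cdot\|_\ast$ to be $\|\cdot\|_\infty$, which is consistent with the $\|\cdot\|_\ast$ bounds established in Lemma~\ref{lm:z-bar} and Lemma~\ref{lm:loss-estimator}.

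First I would establish the variational characterization. Writing the first-order optimality conditions for $\min_{x\in\mathcal{P}_{K-1}}\{\langle x, z\rangle + \psi_\eta(x)\}$ with a Lagrange multiplier $\lambda$ enforcing $\sum_i x(i)=1$, the stationarity condition $z(i) + \frac{1}{\eta}\log x(i) + \lambda = 0$ yields $x(i) \propto \exp\{-\eta z(i)\}$; normalizing gives exactly the softmax appearing in Lines~7--8 of Algorithm~\ref{algo:FedExp3}, with $z = z^v_t$ and $\eta = \eta_{t-1}$ for $x^v_t$, and $z = \bar{z}_t$, $\eta = \eta_{t-1}$ for $y_t$. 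Since $\psi_\eta$ is strictly convex this minimizer is unique, and by the definition of the Fenchel conjugate it equals $\nabla\psi^\ast_{\eta}(-z) = \arg\max_{x}\{\langle x, -z\rangle - \psi_\eta(x)\}$. This gives the two displayed identities.

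For the Lipschitz bound, the key ingredient is that the negative entropy is $1$-strongly convex with respect to $\|\cdot\|_1$ on the simplex (Pinsker's inequality), so $\psi_{\eta_{t-1}}$ is $(1/\eta_{t-1})$-strongly convex with respect to $\|\cdot\|_1$. I would then invoke the standard duality that an $\alpha$-strongly convex function (with respect to a norm) has a Fenchel conjugate whose gradient is $(1/\alpha)$-Lipschitz with respect to the dual norm \citep{lattimore_szepesvari_2020}. Applying this with $\alpha = 1/\eta_{t-1}$ and evaluating the gradient map at $-z^v_t$ and $-\bar{z}_t$,
$$
\|x^v_t - y_t\| = \|\nabla\psi^\ast_{\eta_{t-1}}(-z^v_t) - \nabla\psi^\ast_{\eta_{t-1}}(-\bar{z}_t)\| \leq \eta_{t-1}\,\|z^v_t - \bar{z}_t\|_\ast,
$$
which is exactly the claim.

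The main obstacle is the strong-convexity/smoothness duality step: showing that the conjugate of an $\alpha$-strongly convex function has $(1/\alpha)$-Lipschitz gradient requires combining the strong-convexity inequality in the definition with the envelope characterization $\nabla\psi^\ast(y) = \arg\max_x\{\langle x, y\rangle - \psi(x)\}$ to compare the two maximizers; once this is in place the remainder is bookkeeping. I would lean on Pinsker for the strong-convexity constant and cite the duality result rather than reprove it, taking care that the primal/dual norm pairing remains $(\ell_1,\ell_\infty)$ so that the bound feeds correctly into the $\operatorname{CONSENSUS}$ term of the regret decomposition.
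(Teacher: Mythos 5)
Your proposal is correct and follows essentially the same route as the paper: first-order optimality with a Lagrange multiplier to identify the softmax with $\nabla\psi^\ast_{\eta_{t-1}}$, then the $1/\eta_{t-1}$-strong convexity of $\psi_{\eta_{t-1}}$ combined with the strong-convexity/smoothness duality (the paper cites Theorem~4.2.1 of Hiriart-Urruty and Lemar\'echal for this step) to get the $\eta_{t-1}$-Lipschitz bound on the conjugate gradient. Your added care about the $(\ell_1,\ell_\infty)$ norm pairing and the explicit appeal to Pinsker's inequality are welcome clarifications but do not change the argument.
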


\begin{proof}
We prove for $y_t = \underset{x\in \mathcal{P}_{K-1}}{\operatorname{argmin}}\{\langle x, \bar{z}_t \rangle + \psi_{\eta_{t-1}}(x)\}$ whose argument also applies to $x^v_t$.

Notice it suffices to consider the minimization problem
\begin{eqnarray}
    \min_{x\in  \mathcal{P}_{K-1}} & \eta_{t-1} \sum_{k=1}^K x(i) \Bar{z}_t(i) +  \sum_{i=1}^k x(i) \log(x(i)) \nonumber \\
    \text{subject to} & \sum_{k=1}^K x(i) = 1. \nonumber
\end{eqnarray}
It suffices to consider the Lagrangian, 
$$
\mathcal{L} = - \eta_{t-1} \sum_{k=1}^K x(i) \Bar{z}_t(i) -  \sum_{i=1}^k x(i) \log(x(i))  - \lambda \left(\sum_{k=1}^K x(i) - 1\right). 
$$
Consider the first-order conditions for all $i=1,\dots, K$
\begin{eqnarray}
    \frac{\partial \mathcal{L}}{\partial x(i)} &=& -\eta_{t-1} \Bar{z}_t(i) - \log(x(i)) - 1 - \lambda = 0 \nonumber
\end{eqnarray}
which gives $x(i) = \exp\{-\eta_{t-1}\Bar{z}_t(i)\} / \exp\{1+\lambda\}$ for all $i=1,\dots, K$.
Plugging into the constraint $\sum_{k=1}^K x(i) = 1$ together with the definition of Fenchel duality \cite{hiriart-urruty_convex_2010} completes the proof for $y_t$.

Note that the normalized negative entropy $\psi(x)$ is $1$-strongly convex,
$$
\psi(x^\prime) \geq \psi(x) + \langle \nabla\psi(x), x^\prime- x \rangle + \frac{1}{2}\|x^\prime- x\|^2
$$
Multiplying $1/\eta_{t-1}$ both sides of the inequality yields that $\psi_{\eta_{t-1}}(x)$ is $1/\eta_{t-1}$-strongly convex. 
By Theorem~4.2.1 in \cite{hiriart-urruty_convex_2010}, we have that $\nabla \psi^\ast_{\eta_{t-1}}(z)$ is $\eta_{t-1}$-Lipschitz. 

It follows that
$$
\|p^{v}_t - \bar{p}_t\| = \|\nabla \psi_{\eta}^\ast(-z^{v}_t) - \nabla \psi_{\eta}^\ast(-\bar{z}_t)\| \leq \eta_{t-1} \|\bar{z}_t - z^{v}_t\|_\ast .
$$
\end{proof}

We state an upper bound on the network disagreement on the cumulative loss estimators from \cite{hosseini2013online}.
\begin{lemma}
\label{lm:gossip}
For any $v\in \mathcal{V}$ and $t=1,2,\dots,T$, $$
\|\bar{z}_t - z^{v}_t\|_\ast
\leq \frac{K}{\gamma_T}\left( \frac{\min\{2\log T + \log n, \sqrt{n}\}}{1-\sigma_2(W)} +3\right) = \frac{K}{\gamma_T} C_W
$$
where $\sigma_2(W)$ is the second largest singular value of $W$.
\end{lemma}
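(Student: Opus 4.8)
The plan is to track the network disagreement $\bar{z}_t - z_t^v$ through the linear recursion that governs the gossip updates, and then to control it via the mixing rate of $W$. First I would stack the per-agent cumulative estimators into a matrix $Z_t\in\mathbb{R}^{N\times K}$ whose $v$-th row is $z_t^v$ and write the gossip step of Algorithm~\ref{algo:FedExp3} as $Z_{t+1} = W Z_t + G_t$, where $G_t$ has rows $g_t^v$ (if $W$ is not symmetric one works with $W^\top$, which is cosmetic since $\sigma_2(W^\top)=\sigma_2(W)$). Unrolling from the initialization $Z_1 = 0$ gives $Z_t = \sum_{s=1}^{t-1} W^{t-1-s} G_s$, whereas Lemma~\ref{lm:z-bar} together with double stochasticity of $W$ gives $\bar{z}_t = \sum_{s=1}^{t-1} f_s = \tfrac1N\sum_{s=1}^{t-1}\mathbf{1}^\top G_s$. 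Subtracting, the disagreement becomes
\[
\bar{z}_t - z_t^v = \sum_{s=1}^{t-1} e_v^\top\Bigl(\tfrac1N\mathbf{1}\mathbf{1}^\top - W^{t-1-s}\Bigr) G_s,
\]
so the whole problem reduces to how fast the rows of $W^k$ converge to the uniform vector $\tfrac1N\mathbf{1}^\top$.

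Second I would bound each summand. Writing $k = t-1-s$ and applying the triangle inequality for the dual norm together with $\|g_s^u\|_\ast \le K/\gamma_s \le K/\gamma_T$ (Lemma~\ref{lm:z-bar} and monotonicity of $\gamma_t$), each term is at most $\tfrac{K}{\gamma_T}\bigl\|e_v^\top(\tfrac1N\mathbf{1}\mathbf{1}^\top - W^{k})\bigr\|_1$, i.e.\ $\tfrac{K}{\gamma_T}$ times the $\ell_1$ distance between the $v$-th row of $W^k$ and the uniform distribution. I would bound this $\ell_1$ distance in two complementary ways: trivially by $2$, since both rows are probability vectors; and spectrally by $\sqrt{N}\,\sigma_2(W)^k$, using $\|x\|_1 \le \sqrt{N}\|x\|_2$ and the estimate $\|W^k - \tfrac1N\mathbf{1}\mathbf{1}^\top\|_2 \le \sigma_2(W)^k$. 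The latter follows because $\tilde{W} := W - \tfrac1N\mathbf{1}\mathbf{1}^\top$ satisfies $\tilde{W}^k = W^k - \tfrac1N\mathbf{1}\mathbf{1}^\top$ (as $\mathbf{1}$ is a common left/right singular direction of the doubly stochastic $W$ with singular value $1$) and $\|\tilde{W}\|_2 = \sigma_2(W)$, so submultiplicativity of the operator norm yields the geometric decay with no symmetry assumption needed.

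Finally I would sum $\sum_{k=0}^{t-2}\min\{2, \sqrt{N}\,\sigma_2(W)^k\}$ two ways to recover the two branches of the minimum. The geometric bound alone gives $\sum_{k\ge 0}\sqrt{N}\,\sigma_2(W)^k \le \sqrt{N}/(1-\sigma_2(W))$ up to a small constant, producing the $\sqrt{N}$ branch. For the $2\log T + \log N$ branch I would split the sum at a threshold $k_0 \approx (\log N + 2\log T)/\bigl(2(1-\sigma_2(W))\bigr)$, bounding the $k<k_0$ terms by $2$ each and the tail $k\ge k_0$ geometrically; the elementary inequality $\log(1/\sigma_2(W)) \ge 1-\sigma_2(W)$ converts the spectral gap into $k_0$, and this choice makes $\sqrt{N}\,\sigma_2(W)^{k_0}\le 1/T$, so the tail is negligible. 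Taking the smaller of the two estimates and absorbing the leftover additive constants into the $+3$ yields exactly $\tfrac{K}{\gamma_T}C_W$. I expect the main obstacle to be the second branch: one must argue uniformly over all $t\le T$ (so that the number of terms, at most $t-1$, never hurts, including the short-horizon case $t-1\le k_0$) and track the constants closely enough that they fit under the stated $+3$.
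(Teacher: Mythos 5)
Your argument is correct and is essentially the same as the paper's: the paper proves this lemma purely by setting $L=K/\gamma_T$ (via Lemma~\ref{lm:z-bar} and monotonicity of $\gamma_t$) and citing Eq.~(29) of \cite{duchi2011dual} and Lemma~6 of \cite{hosseini2013online}, and your derivation---unrolling $Z_{t+1}=WZ_t+G_t$, bounding $\|e_v^\top(W^k-\tfrac1N\mathbf{1}\mathbf{1}^\top)\|_1$ by $\min\{2,\sqrt{N}\,\sigma_2(W)^k\}$, and summing with the threshold $k_0$---is precisely the argument behind those cited results, just written out in full. The one point requiring care, bounding the tail by (number of terms)$\times\sqrt{N}\sigma_2(W)^{k_0}\le T\cdot(1/T)=1$ rather than by a geometric series over $1-\sigma_2(W)$, you already flag correctly.
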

\begin{proof}
    From Lemma~\ref{lm:z-bar}, it follows that $\|g^v_t\|_\ast \leq K/\gamma_t$. Since $\{\gamma_{t}\}$ is non-increasing, let $L = K/\gamma_T$ in Eq.~(29) in \cite{duchi2011dual} and Lemma~6 in \cite{hosseini2013online} completes the proof.
\end{proof}

\subsection{Proof of Theorem~\ref{thm:upper-bound:static}} \label{proof:upper-bound}

Let $i^\ast = \arg\min_{i\in \mathcal{A}} \sum_{t=1}^T \Bar{\ell}_t(i)$.
Note that $p^v_t$ is determined by $\mathcal{F}_{t-1}$ and $\mathbb{E}\left[f_t \mid \mathcal{F}_{t-1}\right] = \Bar{\ell}_t$ from Lemma~\ref{lm:loss-estimator}. It follows that for each agent $v\in \mathcal{V}$
\begin{eqnarray} \label{eq:jensen}
    R^v_T
&=& \mathbb{E}\left[\sum_{t=1}^T \langle \Bar{\ell}_t, p^v_t \rangle -  \sum_{t=1}^T \Bar{\ell}_t(i^\ast ) \right] \nonumber \\
&=&  \mathbb{E}\left[
\sum_{t=1}^T \langle \mathbb{E}\left[ f_t \mid \mathcal{F}_{t-1} \right], p^v_t \rangle - \sum_{t=1}^T  \mathbb{E}\left[ f_t (i^\ast) \mid \mathcal{F}_{t-1} \right] \right]\nonumber \\
&= & 
\mathbb{E}\left[\sum_{t=1}^T \langle  f_t  , p^v_t \rangle  -  \sum_{t=1}^T   f_t (i^\ast) \right] \nonumber \\
\end{eqnarray}
By the definition of $p^v_t$, it follows
\begin{eqnarray}
    R^v_T
    &=& \mathbb{E}\left[\sum_{t=1}^T \left( \langle f_t, (1-\gamma) x^v_t + \gamma x^v_1 \rangle - f_t (i^\ast)  \right) \right] \nonumber \\
    &=& \mathbb{E}\left[\sum_{t=1}^T (1-\gamma_t) \left(\langle f_t, x^v_t \rangle - f_t (i^\ast) \right)\right] + \sum_{t=1}^T \gamma_t  \mathbb{E}\left[ \left( \langle f_t, x^v_1 \rangle - f_t (i^\ast) \right) \right] \nonumber \\
    &=&  \mathbb{E}\left[\sum_{t=1}^T (1-\gamma_t) \left(\langle f_t, x^v_t \rangle - f_t (i^\ast) \right)\right] + \sum_{t=1}^T \gamma_t  \left( \langle \Bar{\ell}_t, x^v_1 \rangle - \Bar{\ell}_t (i^\ast) \right) \nonumber \\
    &\leq& \mathbb{E}\left[\sum_{t=1}^T \left(\langle f_t, x^v_t \rangle - f_t (i^\ast) \right)\right] + \sum_{t=1}^T \gamma_t \nonumber \\
    &=& 
    \underbrace{\mathbb{E}\left[\sum_{t=1}^T \left(\langle f_t, y^v_t \rangle - f_t (i^\ast) \right)\right]}_{(\rom{1})} + 
    \underbrace{\mathbb{E}\left[\sum_{t=1}^T \langle f_t, x^v_t - y_t^v \rangle\right]}_{(\rom{2})} + \sum_{t=1}^T \gamma_t \nonumber
\end{eqnarray}
where the first inequality comes from the fact that $\gamma_t>0$ and the fact that $\|\Bar{\ell}_t\|_\ast \leq \sum_{v\in \mathcal{V}} 1/N \|\Bar{\ell}_t^v\|_\ast \leq 1$.

From Lemma~\ref{lm:z-bar}, it follows
$\Bar{z}_t = \sum_{s=1}^{t-1} f_s$.
Hence, it follows from Lemma~\ref{lm:lipschitz} that
$$
y_t = \underset{x \in \mathcal{P}_{K-1}}{\arg \min }\left\{\sum_{s=1}^t\langle f_s, x\rangle+\frac{1}{\eta_{t-1}} \psi(x)\right\}
$$
From Lemma 3 in \cite{duchi2011dual} and  Corollary~28.8 in \cite{lattimore_szepesvari_2020}, we have
\begin{equation}
    \label{eq:rom-1}
    (\rom{1}) \leq \frac{1}{2} \sum_{t=1}^T \eta_{t-1} \mathbb{E}\left[ \|f_t\|_*^2 \right]+\frac{1}{\eta_{T}} \log(K)
\end{equation}
because $\{\eta_t\}$ is a non-increasing sequence.
Note that 
$$
\|x^{v}_t - y_t\| \leq \eta_{t-1} \|z^{v}_t - \bar{z}_t\|_\ast
$$
by Lemma~\ref{lm:lipschitz}. This yields that
\begin{equation}
\label{eq:rom-2}
    (\rom{2}) \leq \sum_{t=1}^T \eta_{t-1} \mathbb{E}\left[ \|f_t\|_\ast   \|z^{v}_t - \bar{z}_t\|_\ast \right]
\end{equation}
Plugging Equations~(\ref{eq:rom-1}) and (\ref{eq:rom-2}) into (\rom{1}) yields that
\begin{eqnarray}
    \label{eq:before-tuning}
    R^v_T  
    &\leq&
    \frac{1}{2} \sum_{t=1}^T \eta_{t-1} \mathbb{E}\left[ \|f_t\|_*^2 \right]+\frac{1}{\eta_{T}} \log(K) + \sum_{t=1}^T \eta_{t-1} \mathbb{E}\left[ \|f_t\|_\ast   \|z^{v}_t - \bar{z}_t\|_\ast \right] + \sum_{t=1}^T \gamma_t  \nonumber \\
    &\leq& \frac{1}{2} \sum_{t=1}^T \eta_{t-1} \mathbb{E}\left[ \|f_t\|_*^2 \right] + \frac{K}{\gamma_T} C_W \sum_{t=1}^T \eta_{t-1} \mathbb{E}\left[ \|f_t\|_\ast \right] + \sum_{t=1}^T \gamma_t +\frac{1}{\eta_{T}} \log(K) \nonumber \\
    &\leq&
    \frac{K^2}{2} \sum_{t=1}^T \frac{\eta_{t-1}}{\gamma_t}+ \frac{K^2}{\gamma_T} C_W \sum_{t=1}^T \eta_{t-1}  + \sum_{t=1}^T \gamma_t +\frac{1}{\eta_{T}} \log(K) \nonumber 
\end{eqnarray}
where the second inequality comes from Lemma~\ref{lm:gossip} and the third inequality comes from Lemma~\ref{lm:loss-estimator}.

Let 
$$
\gamma_t = \sqrt[3]{\frac{\left(C_W + \frac{1}{2}\right)K^2\log K}{t}}
\quad
\text{and}
\quad
\eta_t = \frac{\log K}{T \gamma_T} = \sqrt[3]{\frac{(\log K)^2}{ \left(C_W + \frac{1}{2}\right) K^2 T^2}}.
$$
Then, for every $v\in \mathcal{V}$, we have
\begin{eqnarray}
    R^v_T &\leq& \frac{3}{8} \sqrt[3]{\frac{K^2\log K}{\left(C_W + \frac{1}{2}\right)^2}} T^{\frac{2}{3}} + \sqrt[3]{K^2\log K \frac{C_W^3}{\left(C_W + \frac{1}{2}\right) ^2}} T^{\frac{2}{3}} + \frac{3}{2} \sqrt[3]{\left(C_W + \frac{1}{2}\right)K^2\log K} T^{\frac{2}{3}} + \sqrt[3]{\left(C_W + \frac{1}{2}\right)K^2\log K }  T^{\frac{2}{3}} \nonumber \\
    &\leq & \frac{3}{4}  \sqrt[3]{K^2\log K} T^{\frac{2}{3}} + \sqrt[3]{C_WK^2\log K} T^{\frac{2}{3}} + \frac{5\sqrt[3]{2}}{2} \sqrt[3]{C_WK^2\log K} T^{\frac{2}{3}}  \nonumber \\
    &\leq& 5\sqrt[3]{C_W K^2\log K} T^{\frac{2}{3}}. \nonumber
\end{eqnarray}

\end{document}